\newtheorem{corollary}{Corollary}
\newtheorem{proposition}{Proposition}
\newtheorem{theorem}{Theorem}
\newtheorem{definition}{Definition}
\newtheorem{remark}{Remark}
\title {A Principled Bayesian Framework for Training Binary and Spiking Neural Networks}
 \author{%
   James A. Walker \\
   Monash University\\
   Melbourne, Australia \\
   \texttt{james.walker2@monash.edu} \\
   \And
   Moein Khajehnejad \\
   Monash University\\
   Melbourne, Australia \\
   \texttt{moein.khajehnejad@monash.edu}\\
   \AND
    Adeel Razi \\
    Monash University\\
    Melbourne, Australia \\
    \texttt{adeel.razi@monash.edu} \\
 }
\begin{document}

\maketitle

\begin{abstract}

We propose a Bayesian framework for training binary and spiking neural networks that achieves state-of-the-art performance without normalisation layers. Unlike commonly used surrogate gradient methods -- often heuristic and sensitive to hyperparameter choices -- our approach is grounded in a probabilistic model of noisy binary networks, enabling fully end-to-end gradient-based optimisation. We introduce importance-weighted straight-through (IW-ST) estimators, a unified class generalising straight-through and relaxation-based estimators. We characterise the bias-variance trade-off in this family and derive a bias-minimising objective implemented via an auxiliary loss. Building on this, we introduce Spiking Bayesian Neural Networks (SBNNs), a variational inference framework that uses posterior noise to train Binary and Spiking Neural Networks with IW-ST. This Bayesian approach minimises gradient bias, regularises parameters, and introduces dropout-like noise. By linking low-bias conditions, vanishing gradients, and the KL term, we enable training of deep residual networks without normalisation. Experiments on CIFAR-10, DVS Gesture, and SHD show our method matches or exceeds existing approaches without normalisation or hand-tuned gradients.

\end{abstract}

\section{Introduction}

Binary neural networks (BNNs) and spiking neural networks (SNNs) offer energy-efficient, biologically inspired alternatives to conventional deep learning. BNNs use binary activations, greatly reducing memory and computation, ideal for resource-constrained settings \cite{courbariaux2016binarizedneuralnetworkstraining, rastegari2016xnornetimagenetclassificationusing}. SNNs emulate the sparse, event-driven behavior of biological neurons, enabling low-power, real-time processing of temporal data \cite{Tavanaei_2019}.

Training these networks is challenging due to their non-differentiable, discontinuous activations, making standard gradient-based methods inapplicable. Most approaches use \textit{surrogate gradient} (SG) methods \cite{Neftci2019}, replacing non-differentiable functions with smooth approximations during backpropagation. While effective in practice, SG lacks theoretical grounding, requires manual tuning, and in deterministic binary networks, gradients are zero almost everywhere—violating standard gradient descent assumptions \cite{yin2019understanding}.
A principled alternative is to introduce stochasticity by sampling neuron outputs from binary distributions (e.g., Bernoulli), making the expected loss differentiable and enabling gradient-based optimisation. Building on this, we revisit gradient estimation in noisy binary networks and unify straight-through (ST) and continuous relaxation methods into a novel class: \textit{importance-weighted straight-through} (IW-ST) estimators. We analyse their bias–variance trade-off, showing how classical ST implicitly balances it, and derive an objective that reduces ST bias even in arbitrarily deep networks

Building on these insights, we propose \textit{Spiking Bayesian Neural Networks} (SBNNs), a variational inference framework for training BNNs and SNNs that naturally incorporates noise and regularisation, extending Bayesian Neural Networks \cite{blundell2015weight} to spiking models. Using the local reparameterisation trick \cite{kingma2015variational}, we convert weight uncertainty into neuron-level noise, enabling end-to-end training with IW-ST estimators.

Crucially, the KL divergence term in the variational objective encourages a noise level that both prevents vanishing gradients and minimises estimator bias, eliminating the need for normalisation layers, weight decay, or dropout.

We evaluate our approach on CIFAR-10 \cite{krizhevsky2009learning}, SHD \cite{cramer2020heidelberg}, and DVS-128 Gesture \cite{amir2017low}, consistently matching or surpassing SG-based methods without relying on normalisation or hand-tuned surrogates. Results highlight Bayesian noise as a powerful, principled tool for training discrete and spiking networks.

\textbf{Prior Work.}
In BNNs, the ST estimator is viewed as a linear approximation of a finite-difference gradient \cite{tokui17a}, with bias–variance analyses limited to shallow networks \cite{shekhovtsov2021}. Continuous relaxation methods share similar trade-offs \cite{shekhovtsov2021biasvariancetradeoff} and benefit from Rao-Blackwellisation to reduce variance \cite{paulus2020}, but their connection to ST remains unclear, and analyses don’t generalise to deep networks. In SNNs, SG methods are common \cite{Neftci2019} but depend on hand-crafted surrogates, normalisation, and regularisation. While recent work links noisy gradients to SG \cite{Gygax2024} and applies variational inference to binary weights \cite{peters2018, meng2020}, a unified, principled training framework for BNNs and SNNs is still lacking.

\textbf{Our Contributions.}
We introduce a unified class of importance-weighted straight-through (IW-ST) estimators that generalises classical ST and continuous relaxation methods via analytical Rao-Blackwellisation. Extending bias–variance analysis to deep networks, we provide conditions for low-bias, low-variance gradient estimation. Building on this, we propose Spiking Bayesian Neural Networks (SBNNs), a variational inference framework enabling end-to-end training of BNNs and SNNs without normalisation, weight decay, or dropout. The KL term in the ELBO mitigates vanishing gradients, reduces bias, and regularises the model. This approach extends Bayesian binary weight methods to spiking networks and consistently outperforms SG baselines across benchmarks.

\textbf{Preliminaries and Notations.}
We consider two classes of binary-valued neural models: feedforward binary neural networks (BNNs) and recurrent spiking neural networks (SNNs). Both are built from binary threshold units, but SNNs include temporal dynamics inspired by biological neurons.

A BNN consists of layers of binary neurons, where each neuron outputs either 0 or 1 based on a thresholded preactivation. For a fully connected layer:
\begin{align}
o_i^{(l)} = H(h_i^{(l)}), \quad
h_i^{(l)} = \sum_j w_{ij}^{(l)} o_j^{(l-1)}
\end{align}
where \( h_i^{(l)} \) is the preactivation of neuron \( i \), \( o_i^{(l)} \in \{0,1\} \) is its binary output, and \( w_{ij}^{(l)} \in \mathbb{R} \) denotes the weight from neuron \( j \) in layer \( l-1 \) to neuron $i$ in layer $l$. The activation function \(H(\cdot)\) is the Heaviside step function.
]
An SNN augments a BNN with temporal dynamics by introducing recurrent membrane potentials and spike outputs over discrete time steps:
\begin{align}
o_{i,t}^{(l)} &= H(h_{i,t}^{(l)} - \theta) \\
h_{i,t}^{(l)} &= \beta h_{i,t-1}^{(l)} + \sum_j w_{ij}^{(l)} o_{j,t}^{(l-1)} + \sum_{k \ne i} r_{ik}^{(l)} o_{k,t-1}^{(l)} - \theta o_{i,t-1}^{(l)}
\end{align}
where \( \theta \in \mathbb{R} \) is the firing threshold, \( \beta \in [0,1] \) is the membrane decay factor, and \( r_{ik}^{(l)} \) denotes recurrent connections. This is a discrete approximation of the leaky-integrate-and-fire (LIF) model \cite{gerstner2002spiking}. 
SNNs are a recurrent generalisation of BNNs and share architectural similarities with gated recurrent units such as LSTMs \cite{hochreiter1997long} and GRUs \cite{cho2014learning}.
We assume a supervised setup with loss computed at the final time step \( T \) and output layer \( L \):
\[
\mathcal{L} = \mathcal{L}(o_T^{(L)}, \text{target}).
\]

Derivations are presented for BNNs; generalisation to SNNs follows and is discussed where relevant.







\section{Gradients in Noisy BNNs and SNNs}

Despite its heuristic nature, the SG method is widely used in spiking neural networks and BNNs for its empirical effectiveness (see Appendix A for a brief review). It enables efficient training with standard gradient descent optimisers, however, it lacks a rigorous theoretical basis. Here, we formulate gradient estimation in stochastic binary networks, setting the stage for a unified treatment of surrogate, straight-through, and continuous relaxation-based estimators. Let \( o_i^{(l)} \sim \text{Bernoulli}(F(h_i^{(l)})) \), where \( F() \) is a smooth, increasing function (e.g., a sigmoid). In this probabilistic setting, the expected gradient of the loss with respect to a synaptic weight is:
\begin{align}
\frac{d}{d w_{ij}^{(l)}} \mathbb{E}_{o^{(1:L)}}[\mathcal{L}]
= \mathbb{E} \left[ (\mathcal{L}_1 - \mathcal{L}_0) \cdot F'(h_i^{(l)}) \cdot o_j^{(l-1)} \right],
\end{align}
where \(\mathcal{L}_k=\mathcal{L}(o_i^{(l)}=k):=E[\mathcal{L}|o_i^{(l)}=k,h_i^{(l)}=h]\).  This gradient comprises a presynaptic term \(o_j^{(l-1)}\), a post-synaptic term \(F'(h_i^{(l)})\), and a global learning signal \(\mathcal{L}_1 - \mathcal{L}_0\), which we refer to as the finite difference learning signal. 

Estimating this finite difference is challenging. REINFORCE \cite{williams1992simple} provides an unbiased estimator using an output-loss covariance but suffers from high variance. RAM \cite{tokui17a} reduces variance through explicit marginalisation, requiring two forward passes per neuron. ARM \cite{yin2019arm} achieves lower variance using paired noise samples, while still needing two forward passes per layer.

A more scalable solution is to approximate the finite difference via backpropagation. Backpropagation is efficient and generally yields low-variance estimates by exploiting network structure. However, since the finite difference term is inherently non-linear $\mathcal{L}_1-\mathcal{L}_0= \int_0^1 \frac{d\mathcal{L}}{do} do$, backpropagation can only provide a linear approximation.

Two main families of approximations address this: (i) the ST estimator, which replaces the Heaviside’s zero gradient with a nonzero local derivative, and (ii) continuous relaxation methods (Gumbel-Softmax), which reparameterise the binary variable and use a differentiable surrogate. We show that both approaches fall under a general class of importance-weighted ST estimators (IW-ST($p$)) and derive conditions under which their bias and variance can be simultaneously minimised.

\textbf{Straight-through estimator as a linear approximation.}
The ST estimator was originally introduced as a heuristic technique to handle non-differentiable activations in binary networks. However, it was later shown that ST estimator can be theoretically justified as a linear approximation of the finite-difference gradient in stochastic binary networks \cite{shekhovtsov2021}. Replacing the finite-difference term with a linear approximation at the observed output \( o^{(l)} \), we obtain:
\begin{align*}
\mathbb{E}\left[\mathcal{L}(o_i^{(l)} = 1) - \mathcal{L}(o_i^{(l)} = 0)\right] 
\approx \sum_k w^{(l+1)}_{ki} \, F'(h_k^{(l+1)}) \cdot 
\mathbb{E}\left[\mathcal{L}(o_k^{(l+1)} = 1) - \mathcal{L}(o_k^{(l+1)} = 0) \mid o^{(l)} \right].
\end{align*}
This shows that the finite-difference learning signal at layer \( l \) can be recursively approximated by a weighted combination of differences at layer \( l+1 \), scaled by local sensitivity \( F'(h_k^{(l+1)}) \) and connection weights \( w_{ki} \). This approximation makes backpropagation feasible in binary networks by recursively propagating linearised difference terms across layers. However, the quality of approximation degrades in deeper networks due to compounding bias, which remains an open problem.

\begin{proposition}[Equivalence to surrogate gradient]
The recursive form of the ST estimator gradient matches the recursive SG expression:
\begin{align}
\frac{d\mathcal{L}}{d o_{i}^{(l)}} \approx 
\sum_k w^{(l+1)}_{ki} \, F'(h_k^{(l+1)}) \frac{d\mathcal{L}}{do_k^{(l+1)}}
\end{align}
under the identification \( S'(h_i^{(l)}) \approx F'(h_i^{(l)}) \) and \( \mathbb{E}\left[\mathcal{L}(o_i^{(l)} = 1) - \mathcal{L}(o_i^{(l)} = 0)\right] =\frac{d\mathcal{L}}{do_i^{(l)}} \), where \( S(\cdot) \) is the surrogate function and \( F(\cdot) \) is the spiking probability.
\end{proposition}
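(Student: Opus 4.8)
The plan is to show that the two recursions are literally the same object once the terms are renamed, so the proof reduces to a short downward induction on layer depth together with an explicit side-by-side comparison of the backpropagation chain rule and the linear-approximation recursion already derived in the preceding display.

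First I would fix notation: write $\delta_i^{(l)} := \mathbb{E}[\mathcal{L}(o_i^{(l)}=1) - \mathcal{L}(o_i^{(l)}=0)]$ for the finite-difference learning signal, and $g_i^{(l)} := d\mathcal{L}/do_i^{(l)}$ for the quantity propagated by the SG rule. The claim becomes $\delta_i^{(l)} \approx g_i^{(l)}$ for every $i$ and $l$, under the stated identifications, and I would prove it by induction from $l = L$ downwards. For the base case, at the output layer the loss $\mathcal{L}(o_T^{(L)}, \text{target})$ is a differentiable function of the output embedding, so $\mathcal{L}(o_i^{(L)}=1) - \mathcal{L}(o_i^{(L)}=0)$ is exactly a coordinate finite difference of that function; the first-order identification $\delta_i^{(L)} \approx g_i^{(L)}$ is then precisely the hypothesis $\mathbb{E}[\mathcal{L}(o_i^{(l)}=1) - \mathcal{L}(o_i^{(l)}=0)] = d\mathcal{L}/do_i^{(l)}$ specialised to $l = L$.

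For the inductive step I would take the recursive linear approximation established just above the proposition,
\begin{align*}
\delta_i^{(l)} \approx \sum_k w_{ki}^{(l+1)}\, F'(h_k^{(l+1)})\, \mathbb{E}\!\left[\mathcal{L}(o_k^{(l+1)}=1) - \mathcal{L}(o_k^{(l+1)}=0)\,\middle|\, o^{(l)}\right],
\end{align*}
observe that the conditional expectation inside the sum is the realised-sample ("plug-in") analogue of $\delta_k^{(l+1)}$ that the ST estimator actually manipulates, replace it by $\delta_k^{(l+1)}$ and then, via the induction hypothesis, by $g_k^{(l+1)}$, obtaining $\delta_i^{(l)} \approx \sum_k w_{ki}^{(l+1)} F'(h_k^{(l+1)}) g_k^{(l+1)}$. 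On the other side, the SG backpropagation rule applies the chain rule through $h_k^{(l+1)} = \sum_j w_{kj}^{(l+1)} o_j^{(l)}$ and through the surrogate activation $o_k^{(l+1)} = S(h_k^{(l+1)})$, giving $g_i^{(l)} = \sum_k (\partial h_k^{(l+1)}/\partial o_i^{(l)})\, S'(h_k^{(l+1)})\, g_k^{(l+1)} = \sum_k w_{ki}^{(l+1)} S'(h_k^{(l+1)}) g_k^{(l+1)}$. Under $S'(h) \approx F'(h)$ the two right-hand sides agree term by term, closing the induction.

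The main obstacle — indeed essentially the only nontrivial point — is making rigorous the passage from the marginal conditional expectation $\mathbb{E}[\,\cdot\mid o^{(l)}]$ appearing in the linear-approximation recursion to the pathwise quantity that backpropagation computes along a single sampled trajectory; this is where the "$\approx$" in the proposition genuinely resides, and I would state it explicitly as the modelling identification that ST replaces marginal finite differences by their realised-sample surrogates, consistent with how the ST estimator is defined. Everything else is bookkeeping: matching the presynaptic weight factor $w_{ki}^{(l+1)}$, the local sensitivity $F'(h_k^{(l+1)})$, and the recursively propagated signal between the two expressions, plus checking the output-layer base case.
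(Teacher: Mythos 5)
Your proposal is correct and follows essentially the same route as the paper: the paper's own justification is precisely the side-by-side comparison of the ST linear-approximation recursion (main text) with the SG backpropagation recursion (Appendix~A, Eq.~\ref{eqSG}), matched term by term under the stated identifications $S'\approx F'$ and finite difference $=$ derivative. Your downward induction and your explicit flagging of the marginal-versus-pathwise conditional expectation merely make the bookkeeping of that comparison more precise than the paper does.
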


This reveals that the SG method in deterministic BNNs can be interpreted as estimating the true gradient in a corresponding noisy probabilistic model. A similar observation was made in \cite{Gygax2024}, though in that work the connection is made to the smoothed stochastic derivatives approach from \cite{arya2023automaticdifferentiationprogramsdiscrete}.

\textbf{Continuous relaxations via reparameterisation.}
To enable backpropagation through discrete stochastic units, we start from a continuous-relaxation based approach. We perform the following reparameterisation:
\begin{align}
o_i^{(l)} = H\left(F(h_i^{(l)}) - 1 + u_i\right), \quad u_i \sim \text{Uniform}[0,1].
\end{align}
Since \( H(\cdot) \) is non-differentiable, we introduce a continuous relaxation using a smooth surrogate:
\begin{align}
s_i^{(l)} := S_k(F(h_i^{(l)}) - 1 + u_i) = \frac{1}{1 + e^{- \frac{F(h_i^{(l)}) - 1 + u_i}{k}}},
\end{align}
where \( S_k(\cdot) \) is a temperature-controlled sigmoid. As \( k \to 0 \), \( S_k(x) \to H(x) \). This relaxation allows us to move the gradient inside the expectation, yielding a reparameterised estimator.

\begin{align}
\frac{d\mathcal{L}^s}{d w_{ij}^{(l)}} = 
\frac{d\mathcal{L}^s}{d s_i^{(l)}} \cdot S_k'\left(F(h_i^{(l)}) - 1 + u_i\right) \cdot F'(h_i^{(l)}) \cdot s_j^{(l-1)},
\end{align}
where \( \mathcal{L}^s \) be the loss computed on the relaxed outputs. 

As \( k \to 0 \), this estimator becomes unbiased; however, the variance increases sharply, often destabilizing training \cite{shekhovtsov2021biasvariancetradeoff}.  To combat this issue the GS-ST was proposed \cite{jang2017categorical}, which uses the binary output but backpropagates through the relaxed surrogate:
\begin{align}
\frac{d\mathcal{L}}{d w_{ij}^{(l)}} \approx 
\left. \frac{d\mathcal{L}}{d s_i^{(l)}} \right|_{s_i^{(l)} = o_i^{(l)}}
\cdot S_k'\left(F(h_i^{(l)}) - 1 + u_i\right)
\cdot F'(h_i^{(l)}) 
\cdot o_j^{(l-1)}.
\end{align}
This estimator is biased for all \(k\), but typically exhibits a better bias-variance trade-off than pure continuous-relaxation.

To further reduce the variance of this estimator, Rao-Blackwellisation can be applied as in \cite{paulus2020}:
\begin{align}
\mathbb{E}_{u \mid o^{(1:L)}}\left[
\frac{d\mathcal{L}}{d o_i^{(l)}} \cdot 
S_k'\left(F(h_i^{(l)}) - 1 + u_i\right)
\right]  o_j^{(l-1)}
= \frac{d\mathcal{L}}{d o_i^{(l)}} \cdot \mathbb{E}_u \left[S_k'\left(F(h_i^{(l)}) - 1 + u_i\right)\right]o_j^{(l-1)}.
\end{align}

Under the Gumbel reparameterization, this conditional expectation is intractable, and Monte Carlo approximations are typically required.

\textbf{Analytical Gumbel-Rao estimator and IW-ST($p$).}
Our choice of \textit{uniform} reparameterization (rather than Gumbel/Logistic), enables us to evaluate the conditional expectation \textit{analytically}, leading to a closed-form estimator.

\begin{definition}[Analytical Gumbel-Rao estimator (AGR)]
The AGR estimator is given by:
\begin{align}
\frac{d\mathcal{L}}{d w_{ij}^{(l)}} =
\left( \frac{S(F(h_i^{(l)})) - S(0)}{F(h_i^{(l)}))} \right)^{o_i^{(l)}} 
\left( \frac{S(0) - S(F(h_i^{(l)})) - 1)}{1 - F(h_i^{(l)}))} \right)^{1 - {o_i^{(l)}} } 
\cdot \frac{d\mathcal{L}}{d o_i^{(l)}} 
\cdot F'(h_i^{(l)}) 
\cdot o_j^{(l-1)}.
\end{align}
\end{definition}

The AGR estimator is a Rao-Blackwellised version of GS-ST and yields lower variance without changing the bias. The first three terms of the AGR expression can be rewritten as:
\begin{align}
(w_0 + w_1) \cdot \mathbb{E}_{o \sim \text{Bern}(\frac{w_1}{w_0 + w_1})} \left[ \frac{d\mathcal{L}}{d o} \right],
\end{align}
where  $w_1 = S(F(h)) - S(0)$, and $w_0 = S(0) - S(F(h) - 1)$. This shows that AGR performs a trapezoidal integration over the derivative \(\frac{d\mathcal{L}}{d o}\), where the sampling distribution is governed by the surrogate's smoothness and the firing probability \( F(h) \). Note that we drop the sub and super-scripts for clarity. The full derivation of this estimator and further discussion is provided in Appendix B.

\begin{remark}[Gradient damping]
The total coefficient \( w_0 + w_1 \leq 1 \) serves as an implicit gradient damping factor, reducing gradient variance. This is analogous to empirical findings in SNN training where the surrogate gradients are scaled by $\approx 0.3$ to improve gradient stability. 
\end{remark}

\begin{remark}[Bias-variance trade-off]
In the zero-temperature limit (\( k \to 0 \)), we have \( w_1 = w_0 = 0.5 \), yielding equal weight to both output states. This corresponds to the trapezoidal rule and is unbiased for linear or quadratic losses \cite{shekhovtsov2021biasvariancetradeoff}. 
\end{remark}

\vspace{0.5em}
\textbf{Generalising the IW-ST(\(p\)) family.}
The analytical form of AGR naturally motivates a broader family of estimators where the weights $w_0$ and $w_1$ can be chosen more flexibly.

\begin{definition}[Importance-weighted straight-through estimator (IW-ST(\(p\)))]
Let \( p \in (0,1) \) be a learned or fixed mixing parameter. IW-ST($p$) estimators approximate the finite difference as:
\begin{align}
\mathcal{L}_1-\mathcal{L}_1
\approx  \mathbb{E}_{o \sim \text{Bern}(p)} 
\left[ \frac{d\mathcal{L}}{d o} \right]
= p \cdot \left. \frac{d\mathcal{L}}{d o} \right|_{o = 1} 
+ (1 - p) \cdot \left. \frac{d\mathcal{L}}{d o} \right|_{o = 0}
\end{align}
\end{definition}

Let \( o \sim \text{Bern}(F(h)) \) be the forward-sampled output. The IW-ST(\(p\)) gradient can be estimated via importance sampling as:
\begin{align}
\mathbb{E}_{o \sim \text{Bern}(F(h))} \left[
\left( \frac{p}{F(h)} \right)^o 
\left( \frac{1 - p}{1 - F(h)} \right)^{1 - o} 
\cdot \frac{d\mathcal{L}}{d o}
\right]
\end{align}

\begin{remark}
Setting \( p = F(h) \) recovers the classical ST estimator, showing that IW-ST(\(p\)) generalises the ST estimator with tunable bias-variance behaviour. Choosing \( p \) dynamically based on \( h \) enables fine-grained control over the estimator’s properties per neuron and per forward pass.
\end{remark}

\textbf{Theoretical guarantees for IW-ST($p$) estimators.}
Whilst it has previously been established that the ST estimator is unbiased if the loss is multi-linear \cite{shekhovtsov2021}, when utilised in multi-layer networks, this assumption inevitably break down. \textbf{Here we present a new theoretical result that extend beyond these conditions by characterising the bias of IW-ST($p$) estimators in arbitrarily deep architectures and identifying a concrete criterion for unbiasedness based on the network state}. Note that in following we assume $F(h)$ is the normal CDF with variance $\sigma^2$, $F(h)=\Phi(\frac{h}{\sigma})$, where this variance can differ between neurons. When we condition on 'input' in the following, this refers to the input to the network (ie a given training example for supervised learning). Full details of these proofs are provided in Appendix C.

\begin{theorem}
    The ST method is unbiased when:
\begin{enumerate}
  \item For all neurons $i$ in any layer $l$, for all configurations of $o^{(l-1)}$,
  \[
    p\bigl(o^{(l)}_i=1 \mid o^{(l-1)},\text{input}\bigr)
    \;=\;
    \sum_k g\bigl(o_k^{(l-1)},\text{input}\bigr),
  \]
  i.e.\ each neuron’s probability of firing is a linear combination of its inputs.

  \item For all output neurons $j$ in layer $L$,
  \[
    p\bigl(o_j^{(L)} \mid o^{(L-1)},\text{input}\bigr)
    \;=\;
    \mathbb{E}_{\,o^{(L-1)}}\!\bigl[p\bigl(o_j^{(L)} \mid o^{(L-1)},\text{input}\bigr)\bigr].
  \]
  \item The loss function is linear in the output neurons:
  \[
   \mathcal{L}(o_i^{(L)}=1)- \mathcal{L}(o_i^{(L)}=0)=\frac{d\mathcal{L}}{do_i^{(L)}}.
  \]
\end{enumerate}
\end{theorem}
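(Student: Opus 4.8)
The plan is to show that the expectation of the straight-through (ST) weight gradient equals the exact gradient $\frac{d}{dw_{ij}^{(l)}}\mathbb{E}_{o^{(1:L)}}[\mathcal{L}]=\mathbb{E}\big[(\mathcal{L}_1-\mathcal{L}_0)\,F'(h_i^{(l)})\,o_j^{(l-1)}\big]$. Refining the conditioning in the finite-difference signal to the whole state, set $\Delta_i^{(l)}:=\mathbb{E}[\mathcal{L}\mid o_i^{(l)}{=}1,o^{(1:l-1)},\mathrm{input}]-\mathbb{E}[\mathcal{L}\mid o_i^{(l)}{=}0,o^{(1:l-1)},\mathrm{input}]$, so the exact gradient is also $\mathbb{E}[\Delta_i^{(l)}\,F'(h_i^{(l)})\,o_j^{(l-1)}]$. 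The ST estimator is $\tfrac{d\mathcal{L}}{do_i^{(l)}}\cdot F'(h_i^{(l)})\cdot o_j^{(l-1)}$ with $\tfrac{d\mathcal{L}}{do_i^{(l)}}$ the back-propagated signal obeying $\tfrac{d\mathcal{L}}{do_i^{(l)}}=\sum_k w_{ki}^{(l+1)}F'(h_k^{(l+1)})\tfrac{d\mathcal{L}}{do_k^{(l+1)}}$. Since $F'(h_i^{(l)})\,o_j^{(l-1)}$ is measurable with respect to $o^{(1:l-1)}$, the tower property reduces the theorem to the single identity, for all $l,i$,
\[
\mathbb{E}\Big[\tfrac{d\mathcal{L}}{do_i^{(l)}}\;\Big|\;o^{(1:l-1)},\mathrm{input}\Big]=\Delta_i^{(l)},
\]
the expectation being over the forward-sampled $o^{(l:L)}$. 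I would prove this by downward induction on $l$.

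For the base case $l=L$, Condition~3 gives $\tfrac{d\mathcal{L}}{do_i^{(L)}}=\mathcal{L}(o_i^{(L)}{=}1)-\mathcal{L}(o_i^{(L)}{=}0)$, which by linearity of the loss in the outputs is a constant equal to $\Delta_i^{(L)}$, so the identity is immediate. Before the inductive step, I would record that Conditions~1 and~3 make the value functions $W^{(m)}(o^{(m)}):=\mathbb{E}[\mathcal{L}\mid o^{(m)},\mathrm{input}]$ affine in $o^{(m)}$ for every $m$: it holds at $m=L$ by Condition~3, and $W^{(m)}=c^{(m+1)}+\sum_k d_k^{(m+1)}q_k^{(m+1)}(o^{(m)})$ stays affine because each firing probability $q_k^{(m+1)}$ is affine in $o^{(m)}$ by Condition~1. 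Consequently $\Delta_i^{(l)}$ is exactly the $o_i^{(l)}$-coefficient of $W^{(l)}$, and the affine chain rule gives $\Delta_i^{(l)}=\sum_k d_k^{(l+1)}\gamma_{ki}^{(l+1)}$, where $\gamma_{ki}^{(l+1)}:=q_k^{(l+1)}|_{o_i^{(l)}=1}-q_k^{(l+1)}|_{o_i^{(l)}=0}$ is the configuration-independent discrete response of neuron $k$'s firing probability to flipping $o_i^{(l)}$, and $d_k^{(l+1)}=\Delta_k^{(l+1)}$ by the induction hypothesis.

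Applying the recursion, the induction hypothesis inside the conditional expectation (the factors $w_{ki}^{(l+1)}F'(h_k^{(l+1)})$ pull out since they depend only on $o^{(1:l)}$, and $\Delta_k^{(l+1)}$ is constant), and the tower property, the left-hand side of the target identity becomes $\sum_k d_k^{(l+1)}\,\mathbb{E}_{o^{(l)}\mid o^{(l-1)}}\!\big[w_{ki}^{(l+1)}F'(h_k^{(l+1)})\big]$, so the step reduces to
\[
\mathbb{E}_{o^{(l)}\mid o^{(l-1)},\mathrm{input}}\!\big[w_{ki}^{(l+1)}F'(h_k^{(l+1)})\big]=\gamma_{ki}^{(l+1)}\qquad\text{for all }k,i\text{ and all }o^{(l-1)}.
\]
This is the heart of the argument and the step I expect to be the main obstacle. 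Its left side is the back-propagated \emph{continuous} local sensitivity $\partial_{o_i^{(l)}}q_k^{(l+1)}=w_{ki}^{(l+1)}F'(h_k^{(l+1)})$, evaluated at the realized vertex of the hypercube and then averaged over the forward distribution of $o^{(l)}$; its right side is the true \emph{discrete} difference of $q_k^{(l+1)}$ along the $o_i^{(l)}$-edge. These two extensions of the same Boolean function have different derivatives, and reconciling them is exactly where the hypotheses $F=\Phi(\cdot/\sigma)$ and ``Condition~1 holds for all configurations of $o^{(l-1)}$'' are needed essentially: exact affineness of $\Phi(h_k^{(l+1)}/\sigma)$ over the hypercube pins down the preactivations $h_k^{(l+1)}$ tightly enough that $\Phi$'s vertex-derivative matches its edge-difference in expectation under the forward law, and Condition~2 supplies the analogous statement at the output layer, where the probabilities feeding the loss must additionally be independent of the last hidden layer. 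Once this identity is established the induction closes; the SNN case then follows by unrolling the recurrence in time and running the same induction over the (layer, time-step) lattice.
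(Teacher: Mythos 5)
Your reduction is set up correctly and mirrors the paper's recursive structure: backward induction over layers, with the conditional expected loss made additive/affine by Condition~1 and the base case supplied by Condition~3. But the proposal stalls exactly where you say it does, and that step is not a minor technicality you can defer -- it is the content of the theorem. The identity you need,
\[
\mathbb{E}_{o^{(l)}\mid o^{(l-1)}}\bigl[w_{ki}^{(l+1)}F'(h_k^{(l+1)})\bigr]=\gamma_{ki}^{(l+1)},
\]
is \emph{not} a consequence of the stated hypotheses, and is false in general. Already with a single presynaptic neuron, any function of $o_i^{(l)}\in\{0,1\}$ is trivially affine (so Condition~1 imposes nothing), the discrete edge-difference is $\Phi(w/\sigma)-\Phi(0)$, while the averaged vertex-derivative is a Bernoulli mixture of $\tfrac{w}{\sigma}\phi(0)$ and $\tfrac{w}{\sigma}\phi(w/\sigma)$; these agree only in the limit $w/\sigma\to 0$. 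The paper does not attempt to derive this matching from Conditions~1--3. Instead it isolates it as a separate linearisation step, $F\bigl(\sum_{j\neq i}w_{kj}o_j+w_{ki}\bigr)-F\bigl(\sum_{j\neq i}w_{kj}o_j\bigr)\approx F'\bigl(\sum_j w_{kj}o_j\bigr)w_{ki}$, explicitly flagged as an approximation valid when $w_{ki}/\sigma_k$ is small (and later made rigorous only in the concentration limit of the Corollary). The conditions of the theorem are used for the \emph{other} half of the argument -- making the conditional expected loss decompose additively across neurons -- not for reconciling the continuous local derivative with the discrete difference. So your plan, which tries to extract the derivative-matching from exact affineness, cannot close as written.

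Two smaller issues. First, your invocation of Condition~2 is a gesture rather than an argument: in your formulation you condition on the \emph{full} previous layer at every step, under which the output neurons are already conditionally independent, so it is not visible where Condition~2 enters; in the paper it is needed precisely because the additive decomposition there conditions on a single final-layer neuron together with the distant layer $o^{(l)}$, and the cross-terms $\sum_j\sum_{k\neq j}\mathbb{E}[g_k(o_k^{(L)})\mid o_j^{(L)},o^{(l)}]$ must be shown constant. If you keep your full-layer conditioning you should either show Condition~2 is implied or explain why your statement differs from the paper's. Second, the one-line treatment of the SNN case (``unroll and run the same induction'') ignores the within-layer temporal recurrence, which is exactly what the paper's convolutional view is introduced to handle.
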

\begin{proof}[Proof sketch]
When computing the gradient with respect to neurons in layer~$l$, the key approximation to arrive at the ST estimator is
\[
  \mathbb{E}\bigl[\mathcal{L}\mid o^{(l+1)}\bigr]
  \;\approx\;
  \sum_j \mathbb{E}\bigl[\mathcal{L}\mid o_j^{(l+1)},\,o^{(l)}\bigr],
\]
for the observed $o^{(l)}$.  One shows that this holds at layer~$l+1$  if \(
  \mathbb{E}\bigl[\mathcal{L}\mid o^{(l+2)}\bigr]
  \;\approx\;
  \sum_j \mathbb{E}\bigl[\mathcal{L}\mid o_j^{(l+2)},\,o^{(l)}\bigr],
\) and $p(o_j^{(l+2)}=1|o^{(l+1)})$ is linear in $o^{(l+1)}$.  A recursive argument then establishes that conditions (1) and (2) are sufficient for unbiasedness when the finite difference is used at the final layer. Condition (3) enables us to replace this remaining finite difference with its linear approximation. 
\end{proof}

\begin{corollary}
Minimising the bias corresponds to maximising
\[
  p\!\Bigl(\bigcap_i\bigl\lvert\tfrac{h_i}{\sigma_i}-\mathbb{E}[\tfrac{h_i}{\sigma_i}]\bigr\rvert<\epsilon \;\Bigm|\;\text{input}\Bigr),
\]
for sufficiently small $\epsilon>0$, where $i$ ranges over all neurons.
\textbf{This corollary formalises a new sufficient condition for low-bias training in deep binary networks.}
\end{corollary}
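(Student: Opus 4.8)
\emph{Proof plan.} The plan is to turn the three sufficient conditions of the Theorem into a single quantitative bound on the ST bias, and then argue that once $\epsilon$ is fixed small, the only quantity in this bound that depends on the trained network is the probability of the event $E_\epsilon:=\bigcap_i\{\,|h_i/\sigma_i-\mathbb{E}[h_i/\sigma_i\mid\text{input}]|<\epsilon\,\}$. First I would revisit the recursive argument behind the Theorem: layer by layer, the ST estimator is exact except for the approximation $\mathbb{E}[\mathcal{L}\mid o^{(l+1)}]\approx\sum_j\mathbb{E}[\mathcal{L}\mid o_j^{(l+1)},o^{(l)}]$, which introduces no error precisely when each map $o^{(l-1)}\mapsto p(o^{(l)}_i=1\mid o^{(l-1)},\text{input})$ is affine (condition~1), together with condition~2 at the output layer and condition~3 on the loss (the latter enforced separately via the auxiliary loss). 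Consequently the total input-conditional bias is a finite sum of per-neuron terms, each bounded by the ``affinity defect'' of the corresponding conditional-probability map.

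Next I would make that defect explicit. Since $p(o^{(l)}_i=1\mid o^{(l-1)},\text{input})=\Phi(h_i^{(l)}/\sigma_i)$ and $h_i^{(l)}/\sigma_i$ is itself affine in $o^{(l-1)}$, the sole obstruction to affinity is the curvature of $\Phi$ across the range of values $h_i^{(l)}/\sigma_i$ takes as $o^{(l-1)}$ ranges over its support. Taylor expanding $\Phi$ about $\mu_i:=\mathbb{E}[h_i^{(l)}/\sigma_i\mid\text{input}]$ gives $\Phi(x)=\Phi(\mu_i)+\phi(\mu_i)(x-\mu_i)+R_i(x)$, with $\phi$ the normal density and $|R_i(x)|\le\tfrac12\|\phi'\|_\infty(x-\mu_i)^2$; the affine part contributes nothing to the bias, so each per-neuron term is controlled entirely by $R_i$. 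Splitting the relevant expectation over $E_\epsilon$ and its complement, on $E_\epsilon$ every $|x-\mu_i|<\epsilon$ so the residual contribution is $O(\epsilon^2)$, while off $E_\epsilon$ I would bound the integrand crudely — using boundedness of $\mathcal{L}$ (or finite moments of $\mathcal{L}$ and of the preactivations) — so that its contribution is at most $C(1-p(E_\epsilon\mid\text{input}))$. Summing over the finitely many neurons, the magnitude of the input-conditional bias is at most $C_1\epsilon^2+C_2(1-p(E_\epsilon\mid\text{input}))$; condition~2 is covered by the same event, since concentration of $h_j^{(L)}/\sigma_j$ forces $\Phi(h_j^{(L)}/\sigma_j)$ to be nearly constant in $o^{(L-1)}$ up to $O(\epsilon)$. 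Fixing $\epsilon$ small enough for the quadratic Taylor regime, the only network-dependent quantity left is $p(E_\epsilon\mid\text{input})$, so minimising the bias amounts to maximising it, which — after averaging (or taking a worst case) over inputs — is the claim.

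The main obstacle is the compounding of the per-layer defects across depth: I need to verify that the recursive error propagation really telescopes into a \emph{sum} of terms each dominated by a single curvature/concentration quantity, rather than an error that amplifies multiplicatively with the number of layers, and that the off-event tail can be controlled uniformly over layers — which is where an assumption such as bounded loss or uniformly bounded moments of the $h_i/\sigma_i$ becomes essential. A secondary subtlety is that the corollary is an informal equivalence: rigorously I would only obtain the upper bound $|\text{bias}|=O\bigl(\epsilon^2+(1-p(E_\epsilon))\bigr)$, together with the complementary observation that when $p(E_\epsilon)$ is small the linearisations genuinely break down, so the statement is best read as ``the bias is controlled by'' rather than ``equals''.
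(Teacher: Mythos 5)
Your proposal follows essentially the same route as the paper's proof: linearise $\Phi$ about $\mathbb{E}[h_i/\sigma_i\mid\text{input}]$ so that condition (1) of the Theorem holds up to the curvature of the Gaussian CDF on the event $E_\epsilon$, and cover condition (2) by the same concentration event (the paper does this via the Lipschitz property of $\Phi$ plus a triangle inequality, which matches your "nearly constant up to $O(\epsilon)$" step). Your version is in fact more quantitative than the paper's — the explicit $C_1\epsilon^2 + C_2(1-p(E_\epsilon\mid\text{input}))$ decomposition, the bounded-loss tail control, and the caveats about telescoping across depth and the one-sidedness of the bound are all refinements the paper leaves implicit — but the underlying idea is the same, so no substantive comparison is needed.
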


\begin{proof}
Note that we have 
\[
  p\bigl(o_j^{(l+1)}=1\mid o^{(l)}\bigr)
  = F\!\bigl(h_j^{(l+1)}\bigr)
  = \Phi\!\bigl(\tfrac{h_j^{(l+1)}}{\sigma_j}\bigr)
\]
where $h_j^{(l+1)}=\sum_i w^{(l+1)}_{ji} o_i^{(l)}$. Consider a linear approximation of this Gaussian CDF in the terms $o^{(l)}$, around $\mathbb{E}[\tfrac{h_j^{(l+1)}}{\sigma_j}]$. Since the Gaussian CDF is continuous, its linear approximation will only be accurate in a small interval around $\mathbb{E}[\tfrac{h_j^{(l+1)}}{\sigma_j}]$:
\[
  \bigl\lvert\tfrac{h_j^{(l+1)}}{\sigma_j}
  - \mathbb{E}[\tfrac{h_j^{(l+1)}}{\sigma_j}]\bigr\rvert
  < \epsilon.
\]
Imposing this bound at every neuron makes the ST bias vanish as $\epsilon\to0$, as it guarantees condition (1) and (2) above are satisfied. 
\end{proof}

\emph{(In what follows we suppress conditioning on the input.)}

\begin{remark}
Since the curvature of the normal CDF is minimised at its inflection point (zero), if $\mathbb{E}[h_i/\sigma_i]=0$ for each $i$, one may select a larger $\epsilon$ with only a small increase in bias.
\end{remark}

\begin{proposition}
We have the following lower bound:
\[
  p\!\Bigl(\bigcap_i\bigl\lvert\tfrac{h_i}{\sigma_i}
  - \mathbb{E}[\tfrac{h_i}{\sigma_i}]\bigr\rvert<\epsilon\Bigr)
  \;\ge\;
  1 \;-\; \frac{\sum_i \mathrm{Var}(h_i/\sigma_i)}{\epsilon^2}.
\]
\end{proposition}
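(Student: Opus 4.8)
The plan is to derive the bound by combining Boole's inequality with a coordinate-wise Chebyshev estimate; no distributional assumption on the preactivations is needed, so the normal-CDF structure used in the corollary plays no role here. Write $X_i := h_i/\sigma_i$. First I would pass to the complementary event: the intersection $\bigcap_i \bigl\{\lvert X_i - \mathbb{E}[X_i]\rvert < \epsilon\bigr\}$ fails exactly when $\lvert X_i - \mathbb{E}[X_i]\rvert \ge \epsilon$ for at least one neuron, so
\[
1 - p\!\Bigl(\bigcap_i \bigl\{\lvert X_i - \mathbb{E}[X_i]\rvert < \epsilon\bigr\}\Bigr) \;=\; p\!\Bigl(\bigcup_i \bigl\{\lvert X_i - \mathbb{E}[X_i]\rvert \ge \epsilon\bigr\}\Bigr).
\]

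Next I would bound the right-hand side by the union bound, $p\bigl(\bigcup_i \{\lvert X_i - \mathbb{E}[X_i]\rvert \ge \epsilon\}\bigr) \le \sum_i p\bigl(\lvert X_i - \mathbb{E}[X_i]\rvert \ge \epsilon\bigr)$, noting that this step requires no independence across neurons — which matters, since the preactivations $h_i$ are in general correlated through shared presynaptic activity and, in the SBNN, through shared weight samples. Applying Chebyshev's inequality term by term gives $p\bigl(\lvert X_i - \mathbb{E}[X_i]\rvert \ge \epsilon\bigr) \le \mathrm{Var}(X_i)/\epsilon^2 = \mathrm{Var}(h_i/\sigma_i)/\epsilon^2$, and substituting, summing, and rearranging yields the stated lower bound.

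The only point needing care is that each $\mathrm{Var}(h_i/\sigma_i)$ is finite and well defined: this holds because $h_i^{(l)} = \sum_j w_{ij}^{(l)} o_j^{(l-1)}$ is a finite sum of bounded Bernoulli outputs scaled by weights of finite variance, so all second moments exist; and the bound is only informative when $\epsilon^2 > \sum_i \mathrm{Var}(h_i/\sigma_i)$, which is consistent with the $\epsilon \to 0$ reading of the corollary requiring the total preactivation variance to be driven down. I do not anticipate a substantive obstacle here — the proposition is essentially a two-line probabilistic inequality — its purpose being to translate the low-bias localisation event of the corollary into a quantity, the summed preactivation variance, that the KL term of the ELBO directly controls.
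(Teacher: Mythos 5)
Your argument is exactly the paper's proof: pass to the complement, apply the union bound, then apply Chebyshev's inequality to each neuron. The additional remarks on finiteness of the variances and the absence of any independence assumption are correct but not needed beyond what the paper states.
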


\begin{proof}
Application of the union bound and Chebyshev’s inequality.
\end{proof}

Thus, minimising $\sum_i \mathrm{Var}(h_i/\sigma_i)$ reduces ST bias.  Additionally enforcing a zero-mean constraint $\sum_i\bigl(\mathbb{E}[h_i/\sigma_i]\bigr)^2$  yields a MSE constraint, \(\sum_i \mathbb{E}\bigl[(h_i/\sigma_i)^2\bigr]\approx 0\)

\begin{remark}[Connection to vanishing gradients]
Using the ST backpropagation update with Gaussian PDF $\phi$,
\[
  \mathbb{E}[\mathcal{L}(o^{(l)}_i=1)-\mathcal{L}(o^{(l)}_i=0)]
  \approx
  \sum_k \frac{w^{(l+1)}_{ki}}{\sigma_k}\,
    \phi\!\bigl(\tfrac{h^{(l+1)}_k}{\sigma_k}\bigr)\,
    \mathbb{E}[\mathcal{L}(o_k^{(l+1)}=1)-\mathcal{L}(o_k^{(l+1)}=0)\mid o^{(l)}].
\]
Since $\phi(h_k/\sigma_k)$ decays for inputs far from zero, this can lead to the vanishing gradient problem \cite{hochreiter1997long}. The constraint $\mathbb{E}[(h_i/\sigma_i)^2]\approx0$ therefore both minimises ST bias and can help in preventing vanishing gradients.
\end{remark}

\begin{proposition}
When the zero–mean constraint is enforced for each neuron, the bias of IW-ST($p$) is minimised when
\[
  p \in
  \begin{cases}
    [\,0.5,\,1\,] &\text{if }F(h)>0.5,\\
    [\,0,\,0.5\,] &\text{if }F(h)<0.5.
  \end{cases}
\]
\end{proposition}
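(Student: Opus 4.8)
\emph{Proof plan.} Fix a neuron $i$ in layer $l$ with preactivation $h$, put $G(o):=\mathbb{E}\!\left[\mathcal{L}\mid o_i^{(l)}=o,\;h_i^{(l)}=h\right]$ and $g:=G'$, so that the true finite difference is $\mathcal{L}_1-\mathcal{L}_0=G(1)-G(0)=\int_0^1 g(o)\,do$, and (under the low-bias conditions of the preceding results) $g$ equals the recursively backpropagated derivative. The IW-ST$(p)$ target $p\,g(1)+(1-p)\,g(0)$, and hence the bias $b(p)=p\,g(1)+(1-p)\,g(0)-\int_0^1 g$, is affine and monotone in $p$; so the bias-minimising $p$ over $[0,1]$ is the zero $p^{\star}=\bigl(\int_0^1 g-g(0)\bigr)/\bigl(g(1)-g(0)\bigr)$ of $b$ when $p^{\star}\in[0,1]$, and the endpoint of $[0,1]$ nearest $p^{\star}$ otherwise. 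In both cases the minimiser lies on the same side of $\tfrac12$ as $p^{\star}$, so the plan is to establish $\operatorname{sign}\!\bigl(p^{\star}-\tfrac12\bigr)=\operatorname{sign}\!\bigl(F(h)-\tfrac12\bigr)=\operatorname{sign}(h)$.

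To this end, rewrite
\[
 p^{\star}-\tfrac12=\frac{\int_0^1 g(o)\,do-\tfrac12\bigl(g(0)+g(1)\bigr)}{g(1)-g(0)}.
\]
The numerator is the error of the trapezoidal rule applied to the integrand $g$ on $[0,1]$, i.e.\ $-\tfrac1{12}g''(\xi)=-\tfrac1{12}G'''(\xi)$ for some $\xi\in(0,1)$; the denominator is $g(1)-g(0)=\int_0^1 G''(o)\,do$. It remains to determine the signs of $G'''$ and of $\int_0^1 G''$ under the zero-mean constraint.

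I would do this first for the output-layer case ($l=L-1$, with $\mathcal{L}$ affine in $o^{(L)}$ as in condition (3)): there $G(o)=\mathrm{const}+\sum_k c_k\,\Phi\!\bigl(\bar h_k(o)/\sigma_k\bigr)$, with $c_k=d\mathcal{L}/do_k^{(L)}$ and $\bar h_k(o):=\mathbb{E}\!\left[h_k^{(L)}\mid o_i^{(L-1)}=o\right]$ affine in $o$. The zero-mean constraint $\mathbb{E}[h_k^{(L)}/\sigma_k]=0$ forces $\bar h_k(o)\approx w_{ki}^{(L)}\bigl(o-F(h)\bigr)$, and in the small-$\epsilon$ (low-variance) regime of the previous corollary and proposition this argument stays inside $(-1,1)$, where the Gaussian density $\phi$ is concave. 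Using $\Phi''(x)=-x\phi(x)$ and $\Phi'''(x)=(x^2-1)\phi(x)$ and expanding near the origin, one finds $G'''(\xi)\approx-\phi(0)\,C$ and $\int_0^1 G''(o)\,do\approx\phi(0)\,C\,\bigl(F(h)-\tfrac12\bigr)$, where $C:=\sum_k c_k\,(w_{ki}^{(L)})^{3}/\sigma_k^{3}$. The sign-indeterminate factor $C$ then cancels in the ratio, so $p^{\star}-\tfrac12\approx\bigl(12\,(F(h)-\tfrac12)\bigr)^{-1}$, which has the sign of $F(h)-\tfrac12$; and when this magnitude pushes $p^{\star}$ outside $[0,1]$, the constrained minimiser sits at $p=1$ if $F(h)>\tfrac12$ and at $p=0$ if $F(h)<\tfrac12$ — again on the asserted side. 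For a general hidden neuron I would repeat the near-origin expansion with $G(o)=\mathbb{E}[\mathcal{L}\mid o_i^{(l)}=o]$ unfolded through the recursive ST relation; the key feature is that the numerator ($\propto G'''$) and denominator ($\propto G''$) of $p^{\star}-\tfrac12$ carry, to leading order, the \emph{same} (otherwise sign-indeterminate) combination of downstream weights and loss sensitivities, since both come from $\Phi''$/$\Phi'''$ evaluated at the same near-zero arguments, so the cancellation should persist.

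The hard part is exactly this last step: bounding the remainders of the near-origin expansions — they are of order $\sigma^{-2}\sum_k\mathrm{Var}(h_k/\sigma_k)$ and are made negligible precisely by the zero-mean-plus-low-variance constraint through the Chebyshev/union bound of the previous proposition — and checking that the cancellation of $C$ survives the full recursion for a deep network. One also has to handle separately the degenerate case $C\approx0$, in which $g(1)\approx g(0)$, $b(p)$ is essentially flat in $p$, and the statement holds trivially.
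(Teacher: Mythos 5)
Your proposal is correct in substance but takes a genuinely different route from the paper. The paper localises the bias one downstream neuron $k$ at a time, in the endpoint-rule approximation of $\int_{x}^{x+h}\phi(t)\,dt$ with $x=\sum_{j\neq i}w_{kj}o_j^{(l)}$ and $h=w_{ki}$; it then shows $p^\ast>\tfrac12$ (resp.\ $<\tfrac12$) using only the concavity of $\phi$ on $[-1,1]$ together with the assumption, justified by the zero-mean/low-variance constraint, that the more probable outcome yields the smaller $\lvert x\rvert$ --- no Taylor expansion and no cancellation are needed. You instead aggregate all downstream neurons into $G(o)=\mathbb{E}[\mathcal{L}\mid o_i^{(l)}=o]$, identify the numerator of $p^\star-\tfrac12$ as a trapezoid-rule error $-\tfrac1{12}G'''(\xi)$, and expand $\Phi$ about the origin so that the sign-indeterminate factor $C=\sum_k c_k w_{ki}^3/\sigma_k^3$ cancels, giving the explicit leading-order location $p^\star-\tfrac12\approx\bigl(12(F(h)-\tfrac12)\bigr)^{-1}$. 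The two arguments are consistent: specialising your formula to a single downstream neuron reproduces exactly the leading-order expansion of the paper's $p^\ast=\bigl(\tfrac1h\int_x^{x+h}\phi-\phi(x)\bigr)/\bigl(\phi(x+h)-\phi(x)\bigr)$. Your version buys a quantitative estimate (showing $p^\star$ is typically pinned at an endpoint of $[0,1]$, i.e.\ the stated intervals are loose) and an explicit treatment of how several downstream neurons combine --- a point the paper glosses over, since its per-$k$ intervals do not automatically yield an interval for the single shared $p$ when the weights $c_k w_{ki}\bigl(\phi(x_k+h_k)-\phi(x_k)\bigr)$ have mixed signs, and your leading-order cancellation is precisely what resolves this. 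The costs are symmetrical: your conclusion degrades when $C\approx0$, where the sign of $p^\star-\tfrac12$ is set by uncontrolled higher-order terms rather than being ``trivially flat'' as you claim (though the bias is then negligible for every $p$, so nothing substantive is lost), and you evaluate $\Phi$ at conditional means rather than realised sums --- a Jensen gap that, as you note, is absorbed into the variance-controlled remainder. Both proofs leave the deep-network recursion at the same level of informality, so at the paper's standard of rigour your argument is an acceptable, and in places more informative, alternative.
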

\begin{proof}
See Appendix C.
\end{proof}

\begin{proposition}
Under mild assumptions on the loss, and assuming that $F(h)$ is optimised to minimise this loss, the variance of IW-ST($p$) is minimised when
\[
  p \in
  \begin{cases}
    [\,F(h),\,1\,] &\text{if }F(h)>0.5,\\
    [\,0,\,F(h)\,] &\text{if }F(h)<0.5.
  \end{cases}
\]
\end{proposition}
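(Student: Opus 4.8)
The plan is to reduce the statement to the classical zero-variance importance-sampling calculation and then read off the location of the optimal $p$ from the sign structure that the two hypotheses impose. First I would fix a neuron and condition on the remainder of the network state, so that $f:=F(h)$, the local derivatives $d_1:=\tfrac{d\mathcal{L}}{do}\big|_{o=1}$, $d_0:=\tfrac{d\mathcal{L}}{do}\big|_{o=0}$, and the multiplicative factors $F'(h),\,o_j^{(l-1)}$ are all fixed. Under the forward law $o\sim\mathrm{Bern}(f)$ the IW-ST($p$) estimator takes the value $\tfrac{p}{f}d_1$ with probability $f$ and $\tfrac{1-p}{1-f}d_0$ with probability $1-f$ (up to the common constant $F'(h)\,o_j^{(l-1)}$). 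Computing the first two moments, the mean is $p d_1+(1-p)d_0$ as it should be, and a short expansion gives
\[
  \mathrm{Var}(p)\;\propto\;\frac{p^2}{f}d_1^2+\frac{(1-p)^2}{1-f}d_0^2-\bigl(p d_1+(1-p)d_0\bigr)^2\;=\;\bigl((\alpha+\beta)\,p-\beta\bigr)^2,
\]
with $\alpha=d_1\sqrt{(1-f)/f}$ and $\beta=d_0\sqrt{f/(1-f)}$, using $\alpha\beta=d_1 d_0$. Hence $\mathrm{Var}(p)$ is convex with unique unconstrained minimiser
\[
  p^\star=\frac{\beta}{\alpha+\beta}=\frac{f\,d_0}{(1-f)\,d_1+f\,d_0},
\]
the well-known variance-zeroing importance weight, so the claim becomes: $p^\star\in[F(h),1]$ when $F(h)>\tfrac12$, and $p^\star\in[0,F(h)]$ when $F(h)<\tfrac12$. (Retaining downstream randomness replaces $d_1,d_0$ by their conditional laws and turns the perfect square into a strictly convex quadratic in $p$ with second-moment coefficients, but the argument below is unchanged.)

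Next I would invoke the two hypotheses to fix the relevant signs. The ``mild assumptions on the loss'' I would take to be that $\mathcal{L}$, viewed as a function of the (relaxed) output of this neuron on $[0,1]$, is monotone and convex: monotonicity gives that $d_1$ and $d_0$ have a common sign, and convexity gives $d_0\le d_1$. The hypothesis that $F(h)$ has been optimised to minimise the loss ties that common sign to the side of $\tfrac12$ on which $F(h)$ sits: since the gradient of $\mathbb{E}[\mathcal{L}]$ in $F(h)$ is $\mathcal{L}_1-\mathcal{L}_0=\mathcal{L}(1)-\mathcal{L}(0)=\int_0^1 \tfrac{d\mathcal{L}}{do}\,do$, a value $F(h)>\tfrac12$ forces $\mathcal{L}_1-\mathcal{L}_0\le 0$ and hence $d_1,d_0<0$; symmetrically $F(h)<\tfrac12$ forces $d_1,d_0>0$.

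It then remains to do the short case check. For $F(h)>\tfrac12$ we have $d_0\le d_1<0$, so the denominator $(1-f)d_1+f d_0$ of $p^\star$ is negative; dividing the inequality $p^\star\ge f$ through by $f>0$ and by this negative denominator, it is equivalent to $d_0(1-f)\le d_1(1-f)$, i.e.\ to $d_0\le d_1$, which holds, while $p^\star\le 1$ is equivalent to $(1-f)d_1<0$, which also holds. Hence $p^\star\in[F(h),1]$, and since $p^\star\in[0,1]$ this is the minimiser over the admissible range. The case $F(h)<\tfrac12$ is the mirror image ($0<d_0\le d_1$, positive denominator), giving $p^\star\in[0,F(h)]$. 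In the degenerate linear-loss case $d_1=d_0$ one gets $p^\star=F(h)$ exactly, consistent with the earlier remark that ST ($p=F(h)$) and the zero-temperature AGR coincide.

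The routine part is the variance identity; the substance is the second paragraph. The main obstacle will be making ``$F(h)$ is optimised to minimise the loss'' precise enough to license the sign implication — one must handle boundary optima (where $\mathcal{L}_1-\mathcal{L}_0$ need not vanish, which is in fact the favourable case) and lean on the same $\mathcal{L}_1-\mathcal{L}_0\approx \tfrac{d\mathcal{L}}{do}$ linearisation used throughout the paper. Secondary care is needed for the degenerate endpoints $d_1=0$ or $d_0=0$ (then $p^\star$ lands on an endpoint of the stated interval), for reading ``minimised when $p\in[F(h),1]$'' as ``the unique minimiser lies in $[F(h),1]$'', and for checking that, once downstream randomness is reinstated, the second-moment coefficients in the convex quadratic do not push $p^\star$ outside the interval — which reduces to re-verifying the same two sign inequalities with $d_k^2$ replaced by $\mathbb{E}[D_k^2]$.
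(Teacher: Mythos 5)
Your proposal is correct and follows essentially the same route as the paper: the same conditional-variance formula, the same unconstrained minimiser $p^{\star}=\frac{F(h)\,\mathcal{L}_0'}{F(h)\,\mathcal{L}_0'+(1-F(h))\,\mathcal{L}_1'}$, and the same sign analysis (common sign of $\mathcal{L}_0',\mathcal{L}_1'$ from near-optimality, their ordering from convexity, and the gradient-descent update direction tying that sign to whether $F(h)$ exceeds $\tfrac12$). The perfect-square factorisation of the variance is a nice touch but does not change the substance of the argument.
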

\begin{proof}
See Appendix C.
\end{proof}

\begin{remark}
By setting $p=F(h)$, the classical ST estimator sits in both the low-bias and low-variance intervals.
\textbf{This provides a new justification for the ST estimator as a solution to the bias–variance trade-off in stochastic binary networks.}
\end{remark}

\section{Bayesian Spiking Neural Networks}

In Section 1, we introduced the IW-ST estimator family for noisy binary neural networks and derived conditions for reducing gradient bias and variance. We now show how Bayesian Neural Networks provide a noise source that enables end-to-end gradient-based optimisation in both BNNs and SNNs. This setting offers mechanisms to minimise gradient bias and variance, mitigate vanishing gradients, and ultimately support normalisation-free networks.

\textbf{Bayesian neural networks.}
Unlike conventional ANNs that learn fixed parameters, Bayesian Neural Networks assign prior distributions to parameters and aim to compute their posterior given the data. Since exact inference is generally intractable, we use variational inference to find an approximate posterior, 
\(q(W|data)\), that maximises the evidence lower bound (ELBO), also referred to as the (negative) variational free energy (VFE):
\begin{align}
\text{ELBO} = \sum_d \mathbb{E}_{W \sim q(W)} \left[ \log p\left(\text{target}^{(d)} \mid \text{input}^{(d)}, W \right) \right] + D_{\mathrm{KL}}\left(q(W) \,\|\, p(W)\right)
\end{align}
where \((input^{(d)},target^{(d)})\) is the \(d^{th}\) data point, and \(D_{KL}(\cdot)\) denotes the Kullback-Leibler (KL) divergence.  To approximate the posterior, we parameterise the variational posterior and optimise its parameters via the ELBO. Since the likelihood term aligns with the loss function,gradient descent on the ELBO corresponds to optimising a supervised loss plus a regularisation term (e.g., weight decay).

\textbf{Local reparameterisation.}
Whilst we can calculate the gradient of the ELBO in its current form, to use the IW-ST($p$) estimator we require that the expectation is over the output of each neuron, not the weights. Fortunately, we can easily make this change via the local reparameterisation trick \cite{kingma2015variational}. If we assume \(w_{ij} \sim_{iid} N(m_{ij},\sigma_{ij}^2)\), we can alternatively interpret the output as being noisy \(o_i \sim Bernoulli(F(h_i))\), where $F(h_i)=\Phi(\frac{h_i}{\sigma_i})$ with $\Phi()$ being the standard normal CDF, and we define $h_i=\sum_j m_{ij}^{(l)} o_j^{(l-1)}$, $\sigma_i^2:= \sum_j o_j^{(l-1)2} \sigma_{ij}^2$.


\textbf{Prior and posterior parametrisation.}
We use a variational posterior in which each of the weights are independent normal random variables, \(w_{ij} \sim_q  N(m_{ij},\sigma_{ij}^2)\), where $\sim_q$ is used to denote that this is the approximation posterior with density $q$. Additionally, if we assume the prior distribution is also the product of independent normal random variables, \(w_{ij} \sim_p N(\alpha_{ij},\tau_{ij}^2)\), the KL divergence is tractable, and is given by:
\begin{equation}
D_{\mathrm{KL}}(q(W) \,\|\, p(W)) 
= \sum_{i,j} \ln\left( \frac{\tau_{ij}}{\sigma_{ij}} \right) 
+ \frac{(m_{ij} - \alpha_{ij})^2 + \sigma_{ij}^2 - \tau_{ij}^2}{2 \tau_{ij}^2}
\end{equation}
For the prior variance, \(\tau_{ij}^2\), we follow \cite{kharitonov2018variationaldropoutempiricalbayes}, and apply empirical Bayes to optimise \(\tau_{ij}^2\). Using the ELBO,  we can find a closed form solution for the optimal \(\tau_{ij}^2\) by simply minimising the expression for \(D_{KL}({q(w_{ij})}|p(w_{ij}))\) given above.  This gives the  solution \(\tau_{ij}^2=\frac{m_{ij}^2+\sigma_{ij}^2}{2}\), which when substituted back into the KL divergence gives a simplified regularisation term:
\begin{align}
D_{\mathrm{KL}}(q(w_{ij}) \,\|\, p(w_{ij})) 
= \ln\left( \frac{\sqrt{m_{ij}^2 + \sigma_{ij}^2}}{2\sigma_{ij}} \right) 
= \frac{1}{2} \ln\left( 1 + \left( \frac{m_{ij}}{\sigma_{ij}} \right)^2 \right) + c
\end{align}
The term acts to minimise \((\frac{m_{ij}}{\sigma_{ij{}}})^2\) - regularising each weight in relation to its standard deviation.

\textbf{The role of the KL divergence.}
The variance parameter must balance between too much and too little noise. As performance improves, the likelihood term pushes the noise variance toward $0$. However, if the noise and hence variance approaches $0$, gradients vanish and learning stalls. The KL divergence term counteracts this by applying upward pressure on $\sigma_{ij}^2$, preventing it from collapsing, as it penalises \((\frac{m_{ij}}{\sigma_{ij}})^2\).

Recall from Section 1 that minimising ST bias requires  $E[(\frac{ h_i}{\sigma_i})^2] \approx 0$. While the current KL divergence term may loosely align with this, it doesn’t enforce it directly. It regularises the signal/noise ratio per weight, whereas bias is reduced by controlling this ratio per neuron. As a result, it may over-regularise without ensuring low bias. To address this, we can instead place the variational posterior on the sum of active weights rather than on all weights, that is: 
\begin{equation}
\sum_j w_{ij}^{(l)} o_j^{(l-1)} \,\big|\, \text{input},\, o^{(1:l-1)} 
\sim_q \mathcal{N}\left(
\sum_j m_{ij} o_j^{(l-1)},\ 
\sum_j \sigma^2_{ij} \, o_j^{(l-1)2}
\right)
\end{equation}
and then the ELBO objective becomes:
\begin{equation}
\text{ELBO} = \sum_d \mathbb{E}_{W \sim q} \left[
\log p\left(\text{target}^{(d)} \mid \text{input}^{(d)}, W\right)
\right]
+ \frac{1}{2} \sum_i \ln\left(1 + \left( \frac{\sum_j m_{ij} o_j}{\sigma_i} \right)^2 \right) + c
\end{equation}
where the second term now constrains $\sum_i g(E[(\frac{h_i}{\sigma_{i}})^2])$, where $g()$ is strictly increasing.

A downside of this approach is that we lose the previous interpretation of individual weight posteriors. Instead, the probabilistic interpretation now applies to the posterior over the sum of active weights. However, since each active weight still contributes its mean $m_{ij}$ and variance $\sigma_{ij}^2$, we can still disentangle each weight’s contribution to the overall posterior.

\textbf{Local reparameterisation for spiking neurons.}
The approach described can be applied to various architectures with binary activations. To use the IW-ST estimator, we must apply the local reparameterisation trick to the noisy weights so that each neuron's output can be treated as a Bernoulli random variable, with a probability that is a differentiable function of the network parameters. Two conditions ensure this: (i) the preactivation is a linear function of the weights, and (ii) the activation function is applied directly to this linear combination.
This holds for fully connected layers, and we address weight-sharing in convolutional layers in the Appendix D. For SNNs, since both feedforward weights $w_{ij}^{(l)}$ and recurrent weights $r_{ik}^{(l)}$ enter the membrane potential linearly, and the output is a Heaviside function of that potential, both conditions are satisfied. However, the recurrent structure induces dependencies between $o^{(l)}_{i,t}$ and $o^{(l)}_{i,s}$ for all times $t,s$, making the Bernoulli probability—needed for differentiation—difficult to work with. To address this, we adopt an alternative "convolutional view" of SNNs by expanding the recurrence relation to express the membrane potential as:
\begin{equation}
h^{(l)}_{i,t} = \sum_s \beta^s \left(
\sum_j w^{(l,t,s)}_{ij} \, o^{(l-1)}_{j,t-s} 
+ \sum_{k \ne i} r^{(l,t,s)}_{ik} \, o^{(l)}_{k,t-s-1}
- \theta \, o^{(l)}_{i,t-s-1}
\right)
\end{equation}
This approach involves both sampling independent weights at each time step and resampling past weights, $w_{ij}^{(l,t,s)} \sim N(m_{ij}^{(l)},\sigma_{ij}^{2(l-1)})$, ensuring outputs at different time steps are independent and making the probability function tractable per neuron and time step. Crucially, this can still be implemented efficiently using the recurrent view, without explicitly resampling all past weights. A full derivation is provided in Appendix D, but the key idea is to express membrane potential as:
\begin{align}
h^{(l)}_{i,t} &= h^{*(l)}_{i,t} + \kappa^{(l)}_{i,t} \, \epsilon^{(l)}_{i,t} \nonumber \\
h^{*(l)}_{i,t} &= \beta h^{*(l)}_{i,t-1} 
+ \sum_j m^{(l)}_{ij} \, o^{(l-1)}_{j,t} 
+ \sum_{k \ne i} p^{(l)}_{ik} \, o^{(l)}_{k,t-1} 
- \theta \, o^{(l)}_{i,t-1} \nonumber \\
\kappa^{(l)2}_{i,t} &= \beta^2 \kappa^{(l)2}_{i,t-1} 
+ \sum_j \sigma_{ij}^{(l)2} \, o_{j,t}^{(l-1)2} 
+ \sum_{k \ne i} \nu_{ik}^{(l)2} \, o_{k,t-1}^{(l)2}
\end{align}
where $h^{*(l)}_{i,t}$ is the "noiseless" membrane potential, to which we add a normal random variable with variance $\kappa_{i,t}^{(l)2}$ to arrive at the membrane potential $h_{i,t}^{(l)}$. Both the noiseless membrane potential $h^{*(l)}_{i,t}$ and noise variance $\kappa^{(l)2}_{i,t}$ follow recurrence relations, depending only on the previous state and current inputs. In practice, using a single variance per layer—separately for feedforward and recurrent weights—performs well  ($\sigma_{ij}^{(l)}=\sigma^{(l)}$). With shared variance per layer, the forward-pass overhead is minimal— equal to adding one extra neuron per layer—making it negligible in most modern architectures. Without weight sharing, however, computing separate variances per weight doubles.
 
\textbf{Connection to the SG method.}
This variational inference scheme reduces to the SG method under four conditions: (i) fixed posterior variance, (ii) mean-field approximation (MFA) using weight means instead of sampling, (iii) omission of the KL divergence term, and (iv) SG width increasing with the number of presynaptic inputs.
\section{Experiments}
We now validate this Bayesian approach on BNNs and SNNs, comparing its training efficiency and generalisation abilities against state of the art surrogate-gradient based methods. We use the classical ST estimator in these experiments, but investigate the broader class of IW-ST($p$) estimators and also report full experimental details in Appendix E.

\textbf{Binary neural networks.}
We evaluate our Binary Bayesian Neural Network (BBNN) framework on the CIFAR-10 dataset \cite{krizhevsky2009learning}, using a SG model with batch normalisation as the baseline. We compare this with four BBNN variants: (i) full BBNN with learned posteriors, (ii) BBNN-MFA using mean-field approximation, (iii) BBNN-FPV with fixed posterior variance and MFA, and (iv) BBNN-NKL, which omits the KL term alongside using MFA and fixed variance.


We train a binary ResNet-20 architecture for 200 epochs using Adam (hyperparameters in the supplementary materials). Figure~\ref{fig:all}.a shows training loss and test accuracy from a representative run, with consistent results across seeds (see Appendix E for tables with error bars).
All variants except BBNN-NKL reach nearly 90\% test accuracy; BBNN-NKL fails to train, remaining below 50\%, highlighting the KL term’s role in stabilising training without normalisation.  The remaining BBNNs perform similarly, with BBNN-FPV matching the baseline SG model while avoiding normalisation. Full BBNN yields the lowest test loss, suggesting better-calibrated predictions from its Bayesian formulation. Overall, BBNN-FPV strikes a strong balance between performance and simplicity, requiring neither sampling nor learned variance. We provide further analysis on the impact of shared variance parameters in Appendix E, as well as further analysis on the role of the KL divergence in mitigating vanishing gradients. 

\begin{figure}
    \centering
    \includegraphics[width=0.9\linewidth]{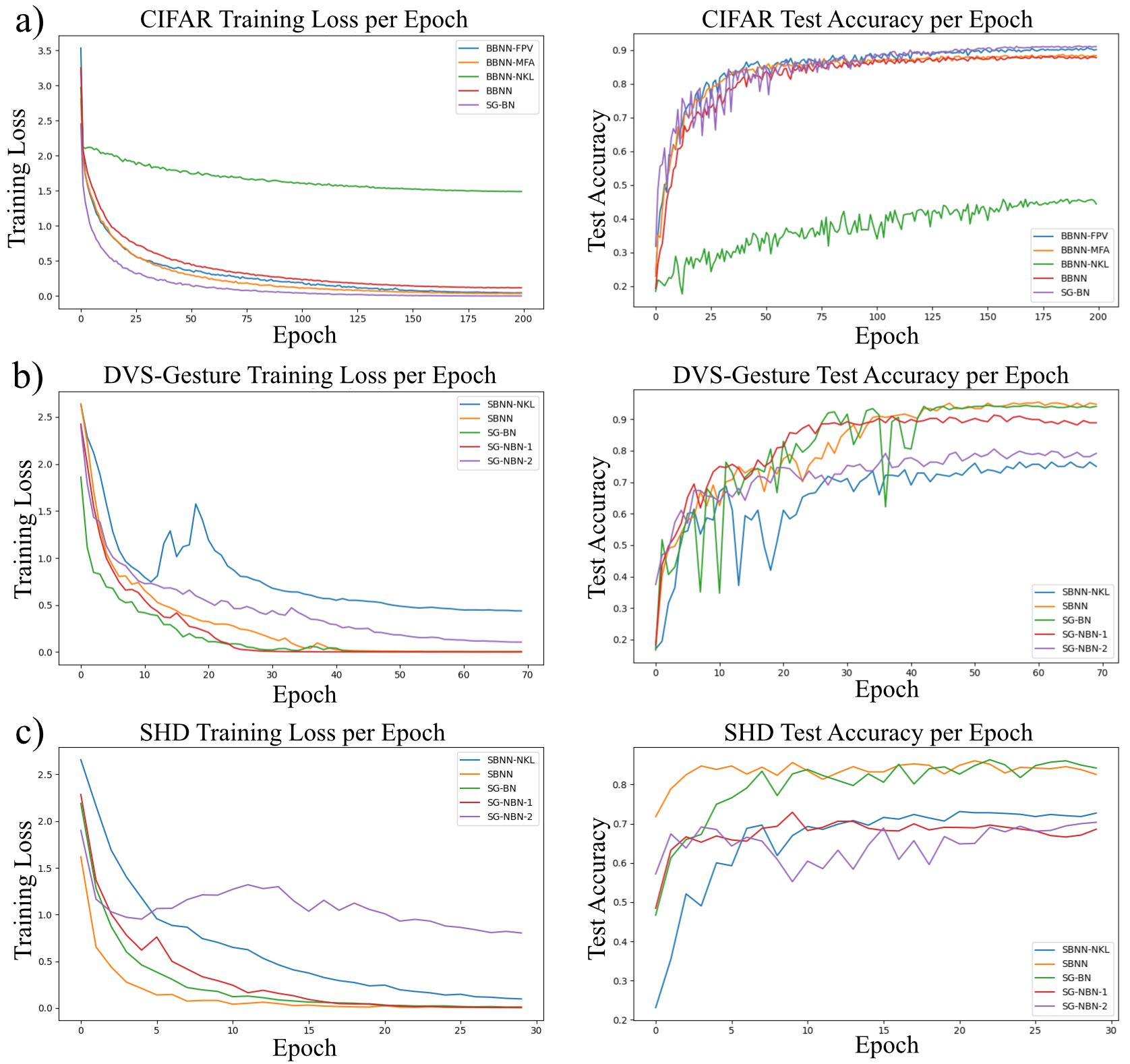}
    \caption{Training loss and test accuracy across epochs for \textbf{a)} CIFAR-10, \textbf{b)} DVS-Gesture, and \textbf{c)} SHD datasets}
    \vspace{-0.5cm}
    \label{fig:all}
\end{figure}

\textbf{Spiking neural networks.}
We evaluate Spiking Bayesian Neural Networks (SBNNs) on two event-based benchmarks: DVS-Gesture (DVS, \cite{amir2017low}) and Spiking Heidelberg Digits (SHD, \cite{cramer2020heidelberg}), both requiring temporal inference and well-suited for SNNs. We compare three SG baselines—one with normalisation (SG-BN) and two without with different SG widths (SG-NBN-1, SG-NBN-2)—against our Bayesian SNN (SBNN) and its KL-free variant (SBNN-NKL).

On DVS, we use a spiking ResNet-20 unrolled over 50 time steps. As shown in Figure~\ref{fig:all}.b, SBNN-NKL again performs poorly, highlighting the KL term’s role in stable training. SG-NBN can approach SG-BN performance when its surrogate width is carefully tuned, suggesting normalisation isn’t strictly necessary with well-chosen parameters. However, SBNN achieves comparable performance to SG-BN without such tuning, working effectively out of the box.
Unlike in BNNs, training in SNNs without normalisation or KL does not completely stall, thanks to the temporal dimension: each example provides multiple learning signals over time. A neuron only needs to spike near threshold once to contribute a gradient, whereas BNN neurons must do so in a single pass, making them more prone to vanishing gradients. Thus, while KL regularisation is essential in BNNs, its role in SNNs is subtler but still helpful.

On SHD, we use fully recurrent SNNs with two hidden layers of 256 neurons over 100 time steps. While recurrence boosts performance, it often destabilises training. Figure~\ref{fig:all}.c shows that SBNN converges efficiently and matches the generalisation of SG-BN. In contrast, SBNN-NKL fails to learn, and SG-NBN struggles to generalise despite reduced training loss. These results suggest that both batch normalisation and KL regularisation aid generalisation, which SBNN achieves implicitly through its Bayesian formulation.

\section{Limitations}

This work advances our understanding of surrogate gradient (SG) methods by grounding them in a principled gradient estimate derived from a corresponding noisy network. However, several limitations remain, motivating future directions.

First, while we show that SG methods approximate the expected gradient in a noisy network via a mean-field approximation, its empirical performance often surpasses than that of Monte Carlo sampling, even though the latter in theory offers an unbiased estimate. We hypothesise that this apparent contradiction arises because the mean-field approximation induces a deterministic forward pass, and the bias induced by this approximation is offset by the substantial reduction in variance. A formal theoretical treatment is needed to quantify this trade-off and establish when mean-field estimates can outperform unbiased sampling. 

Second, our analysis invites a deeper question: what are the limitations of using the gradient from an assumed noisy network? There are method to calculate the exact gradient in SNNs by exploiting the continuous spike times \cite{wunderlich2021event}, yet these methods underperform compared to the surrogate gradient method. Whilst assuming the network is noisy may provide a number of benefits such as more robust learning, for applications where precise spike-timing is important, the presence of noise may degrade performance and limit the expressivity of the spiking network. Exploring alternative gradient estimation strategies which do not rely on noise may enable SNNs to fully exploit their temporal processing capabilities. 

From a normalisation perspective, although we demonstrate that KL divergence mitigates vanishing gradients in deep-residual and shallow-recurrent networks, it fails to stabilise training in deep feedforward architectures without residual connections. Whilst residual connections are standard in modern deep learning, the reliance on these nonetheless suggest traditional normalisation schemes remains indispensable when residual connections are absent. Moreover, our reliance on tuning the KL divergence weight by treating it as a hyperparameter, while standard in variational inference,  detracts from a fully-Bayesian treatment and weakens the probabilistic interpretation of the model.

Together, these limitations point to several promising avenues: developing theoretically grounded variance-bias trade-offs for SG; constructing noise-free but effective gradient estimators; and establishing self-tuning, fully Bayesian mechanisms for regularisation and normalisation.

\section{Acknowledgments}

J.W., M.K. and A.R. are funded by the Office of National Intelligence grant (Ref: NI230100153). A.R. and M.K. are funded by the Australian National Health and Medical Research Council Investigator Grant (Ref: 1194910). A.R. is a CIFAR Azrieli Global Scholar in the Brain, Mind \& Consciousness Program. A.R. is also affiliated with The Wellcome Centre for Human Neuroimaging, supported by core funding from Wellcome [203147/Z/16/Z].

\bibliographystyle{unsrtnat}
\bibliography{references}

\clearpage 

\section*{Appendix}
In Appendix A, we clarify the connection between Surrogate Gradient (SG) method and the Straight-Through (ST) estimator, correcting common misconceptions. Appendix B reviews continuous-relaxation based gradient approaches, culminating in the Analytical Gumbel Rao (AGR) estimator and related IW-ST estimator families. Appendix C provides the key results from the paper on the bias and variance of IW-ST estimators. Appendix D provides how Bayesian Neural Networks can be applied in the context of convolution and spiking layers, proving the key recurrence relation stated in the main text. Appendix E provides a number of additional experiments, full results and training details. 

\section*{A - Surrogate Gradient Method, Straight-Through Estimator}

\subsubsection*{Surrogate Gradient Method}
Deterministic binary neural networks (BNNs) use non-differentiable Heaviside activation \( H'(x) = \mathbb{I}(x \geq 0) \) which causes the gradient to vanish almost everywhere:

\begin{align}
\frac{d\mathcal{L}}{d w_{ij}^{(l)}} 
= \frac{d\mathcal{L}}{d o_i^{(l)}} \cdot \frac{d o_i^{(l)}}{d h_i^{(l)}} \cdot \frac{d h_i^{(l)}}{d w_{ij}^{(l)}}
= \frac{d\mathcal{L}}{d o_i^{(l)}} \cdot H'(h_i^{(l)}) \cdot o_j^{(l-1)}
\end{align}

To address this, the SG method replaces the non-differentiable \( H'(x)\) with derivative of a smooth surrogate function \( S(x) \), such as a sigmoid or triangular function \cite{Neftci2019}.
The surrogate satisfies:

\begin{equation*}
S(x) \approx H(x) \quad \text{and} \quad S'(x) \approx \delta(x)
\end{equation*}

In practice, this results in the following approximation in the backward pass:

\[
\frac{d o_i^{(l)}}{d h_i^{(l)}} = H'(h_i^{(l)}) \Rightarrow S'(h_i^{(l)})
\]
while retaining the binary forward pass:
\[
o_i^{(l)} = H(h_i^{(l)}), 
\]
but
\[
\frac{d \mathcal{L}}{d h_i^{(l)}} = \frac{d \mathcal{L}}{d o_i^{(l)}} \cdot S'(h_i^{(l)})
\]
The upstream gradient from the next layer is computed recursively:
\begin{align}
\frac{d\mathcal{L}}{d o_i^{(l)}} 
= \sum_k \frac{d\mathcal{L}}{d o_k^{(l+1)}} \cdot \frac{d o_k^{(l+1)}}{d h_k^{(l+1)}} \cdot \frac{d h_k^{(l+1)}}{d o_i^{(l)}} 
= \sum_k \frac{d\mathcal{L}}{d o_k^{(l+1)}} \cdot S'(h_k^{(l+1)}) \cdot w_{ki}^{(l+1)}
\label{eqSG} \end{align} 

\subsubsection*{Straight-Through Estimator}
The term \textit{Straight-Through (ST)} estimator has been used ambiguously in the literature. In some cases, it refers to a crude estimator where the non-differentiable function is replaced with a constant derivative, typically $H'(x) \approx 1$ \cite{bengio2013estimating}. In other cases, as used in this work, it refers to a more principled gradient estimator derived for Bernoulli neurons using a linear approximation to the finite difference \cite{shekhovtsov2021}. 

Note that the definition that replaces the non-differentiable function with an identity can be viewed as a special case of surrogate gradient method with is $S(x)=x$. Throughout this paper, ST refers to the gradient estimator for Bernoulli neurons based on a probabilistic model rather than heuristic replacement.
 
As discussed in the main text, the ST estimator is equivalent to the SG method in Eq.~\ref{eqSG} when the surrogate function $S(x)$ is chosen as the CDF of a continuous random variable $F(x)$. This insight unifies the deterministic SG approaches and probabilistic interpretations: even when used heuristically, SG can be viewed as approximating the true gradient in a noisy network governed by the distribution $F(x)$.

\section*{B - Continuous relaxations, Analytic Gumbel-Rao, IW-ST}
Here we briefly review the continuous relaxation approach before deriving the Analytic Gumbel-Rao estimator in full detail. 

\subsubsection*{Continuous relaxation gradients}
The gradient we wish to calculate is given by 
\begin{equation}
  \frac{d}{d w_{ij}}
  \;\mathbb{E}_{\,o^{(1:L)}\sim p_W}\bigl[\mathcal{L}\bigr]
\end{equation}

where we have written \(o^{(1:L)} \sim p_W\) to make explicit the dependence of the joint distribution of neuron outputs on the network weights. To perform backpropagation, we first need to take the derivative inside the expectation, however this exchange is not yet valid as the distribution depends on the weights, which is what we are taking the derivative with respect to. This is easily handled by the reparameterisation trick \cite{kingma2014auto}, which reparameterises the random variables so that the distribution no longer depends on the weights. However if the distribution is not continuous, this reparameterisation alone is not sufficient, as the reparameterised variable will involve some discontinuous function which still prevents backpropagation. The solution to this issue is to replace the discontinuous function with some continuous approximation. This process of reparameterisation and subsequent continuous relaxation is referred to as the Gumbel-Softmax trick, introduced in \cite{maddison2017concrete, jang2017categorical}. Applying the Gumbel-Softmax in the Bernoulli case, we have:

\begin{align}
  o_i^{(l)}
  &= H\!\Bigl(\ln\!\bigl(\tfrac{F(h_i^{(l)})}{1 - F(h_i^{(l)})}\bigr) + \eta\Bigr) \approx S_k\!\Bigl(\ln\!\bigl(\tfrac{F(h_i^{(l)})}{1 - F(h_i^{(l)})}\bigr) + \eta\Bigr)
\end{align}

where the first step is reparameterising the Bernoulli, and the second is the continuous relaxation. The Gumbel-Softmax approach uses a logistic random variable \(\eta \sim Logistic(0,1)\) and  \(S_k(x)=\frac{1}{1+e^{-\frac{x}{k}}}\) is a sigmoid, which converges to the Heaviside as the temperature $k$ goes to $0$.  Whilst this is the standard Gumbel-Softmax reparameterisation, other reparameetrisations are possible - we proceed with the following uniform reparameterisation, as it enables analytical Rao-Blackwellisation in the next section: 

\begin{align}
  o_i^{(l)}
  &= H\bigl(F(h_i^{(l)}) - 1 + u_i\bigr) \approx S_k\bigl(F(h_i^{(l)}) - 1 + u_i\bigr)
  \;:=\;
  s_i^{(l)}
\end{align}

where \(u_i \sim Uniform[0,1]\).

We can now take the gradient inside the expectation and perform backpropagation, ultimately yielding the following gradient estimate:

\begin{align}
  \nabla=
      \frac{d\mathcal{L}^s}{ds_i^{(l)}}
      \;S_k'\bigl(F(h_i^{(l)}) - 1 + u_i\bigr)
      F'\bigl(h_i^{(l)}\bigr)\
      s_j^{(l-1)}.
\end{align}

where $\nabla$ denotes the estimate of the gradient. We use \(\mathcal{L}^s\) here to denote the loss calculated based on the output of the network with the smooth approximations.  There bias-variance trade-off here is straightforward: a higher temperature corresponds to lower variance but high bias, whilst low temperature corresponds to low bias but high variance \cite{shekhovtsov2021biasvariancetradeoff}. The problematic aspect of this approach is that we replace \(\mathcal{L}\) with \(\mathcal{L}^s\), and  \(o_j^{(l-1)}\) with \(s_j^{(l-1)}\) ,  and the gradient will only be unbiased when $\mathcal{L}^s=\mathcal{L}$ and $s^{(l-1)}=o^{(l-1)}$, which is to say when the spiking behaviour of all other neurons in the network is preserved and smooth approximations are not used in the forward pass. In other words, applying a continuous relaxation to a single neurons output will reduce the gradient variance when we backpropagate through that neuron,  but at the cost of increasing the bias when we backpropagate through \textit{all} other neurons. Additionally, the need to maintain a continuous network for training is cumbersome, preventing online-learning and losing the energy efficiency of the binary counterpart. 

\subsubsection*{Gumbel-Softmax Straight-Through Estimator}
A straightforward solution to these issues is the Gumbel-Softmax Straight Through Estimator (GS-ST) \cite{jang2017categorical}. This approach uses the same equations for the backwards pass as the GS estimator, but uses the values from a binary forward pass rather than the values from a forward pass through the relaxed network: 

\begin{equation}
 \nabla =  \frac{d\mathcal{L}}{ds_i^{(l)}}\Bigm|_{s_i^{(l)}=o_i^{(l)}}
  \;S_k'\bigl(F(h_i^{(l)}) - 1 + u_i\bigr)\;
  F'\bigl(h_i^{(l)}\bigr)\;
  o_j^{(l-1)}. 
\end{equation}

This approach is no longer unbiased as $k$ goes to $0$, but avoids the steep bias-variance trade-off of the GS estimator. In what follows we write $\frac{d\mathcal{L}}{do^{(l)}_i}:=\frac{d\mathcal{L}}{ds^{(l)}_i}|_{s^{(l)}_i=o_i^{(l)}}$

\subsubsection*{Gumbel-Rao Estimator}
To further reduce the variance of this estimator,  Rao-Blackwellisation can be applied as in \cite{paulus2020}:

\begin{align}
 \nabla= &\mathbb{E}_{\,u_i \mid o^{(1:L)}}
  \Bigl[
    \frac{d\mathcal{L}}{d o_i^{(l)}}
    \,S'\bigl(F(h_i^{(l)}) - 1 + u_i\bigr)\,
    F'\bigl(h_i^{(l)}\bigr)\,
    o_j^{(l-1)}
  \Bigr]
  \nonumber\\
  &= 
  \frac{d\mathcal{L}}{d o_i^{(l)}}
  \;\mathbb{E}_{\,u_i \mid o^{(1:L)}}
    \bigl[S'\bigl(F(h_i^{(l)}) - 1 + u_i\bigr)\bigr]\,
  F'\bigl(h_i^{(l)}\bigr)\,
  o_j^{(l-1)}.
\end{align}

Note that the only term that depends on $u_i$ is \(S'(F(h_i^{(l)})-1+u_i)\), and the other terms are constant conditional on $o^{(1:L)}$. When the Gumbel-Softmax relaxation is used, this expectation is intractable, and is in practice estimated using Monte Carlo samples. 

\subsubsection*{Analytical Gumbel-Rao}
Conveniently, due to our choice of uniform reparameterisation, we can evaluate this expectation analytically. Since $o_i^{(l)}=H(F(h_i)-1+u_i)$, we can infer that $u_i|o^{(1:l-1)},o_i^{(l)}=1 \sim Uniform[1-F(h_i^{(l)}),1]$ and $u_i|o^{(1:l-1)},o_i^{(l)}=0 \sim Uniform[0,1-F(h_i^{(l)})]$.  These follow from the fact that when we observe the output of the neuron as being $0$ or $1$, this truncates the distribution $u$. We can therefore write the expectation as:

\begin{align}
  \mathbb{E}_{\,u_i \mid o^{(1:L)}}\bigl[S'\bigl(F(h_i^{(l)}) - 1 + u_i\bigr)\bigr]
  &= 
  \int_{0}^{1}
    S'\bigl(F(h_i^{(l)}) - 1 + u_i\bigr)
    \;p\bigl(u_i \mid o^{(1:L)}\bigr)\,
    \mathrm{d}u_i.
\end{align}

When $o_i^{(l)}=1$, this simplifies to:

\begin{align}
  \mathbb{E}_{\,u_i \mid o^{(1:L)}}\bigl[S'\bigl(F(h_i^{(l)})-1+u_i\bigr)\bigr]
  &= 
  \frac{1}{F(h_i^{(l)})}
  \int_{\,1 - F(h_i^{(l)})}^{\,1}
    S'\bigl(F(h_i^{(l)}) - 1 + u_i\bigr)
  \,\mathrm{d}u_i
  \\[1ex]
  &= 
  \frac{1}{F(h_i^{(l)})}
  \Bigl(
    S\bigl(F(h_i^{(l)})\bigr)
    \;-\;
    S(0)
  \Bigr)
\end{align}

On the other hand, when $o_i^{(l)}=0$ we have the simplification:

\begin{align}
  \mathbb{E}_{\,u_i \mid o^{(1:L)}}\bigl[S'\bigl(F(h_i^{(l)}) - 1 + u_i\bigr)\bigr]
  &= 
  \frac{1}{1 - F(h_i^{(l)})}
  \int_{0}^{\,1 - F(h_i^{(l)})}
    S'\bigl(F(h_i^{(l)}) - 1 + u_i\bigr)
  \,\mathrm{d}u_i
  \\[1ex]
  &= 
  \frac{1}{1 - F(h_i^{(l)})}
  \Bigl(
    S(0)
    \;-\;
    S\bigl(1 - F(h_i^{(l)})\bigr)
  \Bigr).
\end{align}

We can then write the result compactly as 

\begin{equation}
  \mathbb{E}_{u_i\mid o^{(1:L)}}\bigl[S'\bigl(F(h_i^{(l)}) - 1 + u_i\bigr)\bigr]
  =
  \Bigl(\frac{S\bigl(F(h_i^{(l)})\bigr)\;-\;S(0)}{F(h_i^{(l)})}\Bigr)^{o_i^{(l)}}
  \;\;
  \Bigl(\frac{S(0)\;-\;S\bigl(F(h_i^{(l)})-1\bigr)}{1 - F(h_i^{(l)})}\Bigr)^{1 - o_i^{(l)}}.
\end{equation}

This then gives the full gradient estimator: 

\begin{equation}
\nabla=  \left(
    \frac{S\bigl(F(h_i^{(l)})\bigr) - S(0)}
         {F(h_i^{(l)})}
  \right)^{o_i^{(l)}}
  \,
  \left(
    \frac{S(0) - S\bigl(F(h_i^{(l)}) - 1\bigr)}
         {1 - F(h_i^{(l)})}
  \right)^{1 - o_i^{(l)}}
  \;
  \frac{d\mathcal{L}}{d o_i^{(l)}}\,
  F'\bigl(h_i^{(l)}\bigr)\,
  o_j^{(l-1)}\bigr.
\end{equation}

Note that this estimator is a single-sample Monte Carlo estimate of its expectation. Writing down the first three terms, we arrive at an elegant interpretation of this estimator (we have dropped sub and super-scripts for clarity):
\begin{align}
  &\mathbb{E}_{\,o \sim \mathrm{Bern}\bigl(F(h)\bigr)}\Biggl[
    \Bigl(\tfrac{S\bigl(F(h)\bigr)-S(0)}{F(h)}\Bigr)^{o}
    \,
    \Bigl(\tfrac{S(0)-S\bigl(F(h)-1\bigr)}{1-F(h)}\Bigr)^{1-o}
    \,
    \frac{d\mathcal{L}}{d o}
  \Biggr] 
  \nonumber\\
  &= 
  \bigl(S\bigl(F(h)\bigr)-S(0)\bigr)\,\frac{d\mathcal{L}}{d o}\Bigm|_{o=1}
  \;+\;
  \bigl(S(0)-S\bigl(F(h)-1\bigr)\bigr)\,\frac{d\mathcal{L}}{d o}\Bigm|_{o=0}
  \nonumber\\
  &= 
  \bigl(w_1 + w_0\bigr)\,
  \Bigl(
    \tfrac{w_1}{w_1 + w_0}\,\frac{d\mathcal{L}}{d o}\Bigm|_{o=1}
    \;+\;
    \tfrac{w_0}{w_1 + w_0}\,\frac{d\mathcal{L}}{d o}\Bigm|_{o=0}
  \Bigr)
  \nonumber\\
  &= 
  \bigl(w_0 + w_1\bigr)\,
  \mathbb{E}_{\,o \sim \mathrm{Bern}\bigl(\tfrac{w_1}{w_1 + w_0}\bigr)}
  \Bigl[\frac{d\mathcal{L}}{d o}\Bigr],
\end{align}

where $w_0=S(0)-S(F(h)-1)$ and  $w_1=S(F(h))-S(0)$

Note that $w_0+w_1\leq1$ and $w_1,w_2\geq0$. As such, we can interpret the expectation term as performing a trapezoidal approximation of the true finite difference:

\begin{align}
  \mathcal{L}_1 - \mathcal{L}_0
  &= 
  \int_{0}^{1} \frac{d\mathcal{L}}{d o}\,d o \approx 
  \frac{w_0}{w_0 + w_1}\,
    \frac{d\mathcal{L}}{d o}\Bigm|_{o=0}
  \;+\;
  \frac{w_1}{w_0 + w_1}\,
    \frac{d\mathcal{L}}{d o}\Bigm|_{o=1}
  \\[0.75ex]
  &=
  \mathbb{E}_{\,o \sim \mathrm{Bern}\bigl(\tfrac{w_1}{w_0 + w_1}\bigr)}
  \Bigl[\frac{d\mathcal{L}}{d o}\Bigr].
\end{align}

We have omitted the leading term $w_0+w_1 \leq 1$ to clearly illustrate the connection to the trapezoidal rule for estimating the integral. However, as noted in Remark 1, this term serves as a 'dampening factor' that reduces the variance of the gradient estimator. The estimator itself is  a linear approximation, where weights are assigned to each terminal value ($o=0$ and $o=1$), denoted by  ($w_0$ and $w_1$) respectively. These weights depend on both the surrogate function by $S(x)$ and neuron's firing probability $F(h)$ . In particular,  the weight assigned to linear approximation around $o=1$ is an increasing function of $F(h)$ such that as the probability of firing increases, the estimator places more weight on this terminal. This is conceptually aligned with the ST estimator, and contributes to a low-variance gradient estimate, as shown in Proposition 4. 

The dampening factor is maximal when $F(h) = 0.5$ and decreases symmetrically as the firing probability deviates from this point. This behaviour can be interpreted as selectively suppressing unreliable gradients—those arising when the approximation deviates significantly from the balanced trapezoid rule.

For the surrogate function,  for a sigmoid with temperature $k$, as this temperature goes to $0$ we have $w_0=w_1=0.5$, which leads to the traditional trapezoid approximation of the finite difference, where equal weight is assigned to each terminal and there is no dampening factor. On the other hand the dampening factor is an decreasing function of the temperature (for a given firing probability), highlighting the role of higher temperatures in variance reduction. As for the influence of the temperature on the weight assigned to each terminal, if $F(h)>0.5$, then the weight assigned to $o=1$ is an increasing function of the temperature, and if $F(h)<0.5$ the weight assigned to $o=1$ is a decreasing function of the temperature. Thus, increasing temperature shifts the approximation to place more weight on the more likely outcome—a strategy that, as formalised in Proposition 4, optimally reduces estimator variance.

\subsubsection*{IW-ST}
We arrive at our novel class of estimators, IW-ST($p$) by allowing the weight assigned to each terminal be chosen more flexibly as some parameter $p$:

\begin{align}
  &\mathcal{L}_1 - \mathcal{L}_0 = 
  \int_{0}^{1} \frac{d\mathcal{L}}{d o}\,d o \approx 
  (1-p)\,\frac{d\mathcal{L}}{d o}\Bigm|_{o=0}
  \;+\;
  p\,\frac{d\mathcal{L}}{d o}\Bigm|_{o=1} = 
  \mathbb{E}_{\,o \sim \mathrm{Bern}(p)}
  \Bigl[\tfrac{d\mathcal{L}}{d o}\Bigr].
\end{align}

Note that we also omit the dampening term; we demonstrate in Appendix E that dampening leads to worse training dynamics in BNNs as it exacerbates vanishing gradients, though it can be of use in SNNs where exploding gradients are more prevalent. Note also that $p$ can be chosen as a function of $h$, enabling the weight assigned to each terminal to be adapted for each neuron. 

In practice, we can use the IW-ST($p$) estimator based on outputs observed during the forward pass via importance sampling, where the final expectation is then approximated with Monte Carlo samples:

\begin{align}
  \mathbb{E}_{\,o\sim\mathrm{Bern}(p)}\Bigl[\tfrac{d\mathcal{L}}{d o}\Bigr]
  &= 
  \mathbb{E}_{\,o\sim\mathrm{Bern}\bigl(F(h)\bigr)}
    \Bigl[\,
      \Bigl(\tfrac{p}{F(h)}\Bigr)^{o}
      \,
      \Bigl(\tfrac{1-p}{1-F(h)}\Bigr)^{1-o}
      \;\tfrac{d\mathcal{L}}{d o}\Bigr]
  \\[0.75ex]
  &\approx
    \Bigl(\tfrac{p}{F(h)}\Bigr)^{o}
    \,
    \Bigl(\tfrac{1-p}{1-F(h)}\Bigr)^{1-o}
    \;\tfrac{d\mathcal{L}}{d o}.
\end{align}

This enables us to use the IW-ST estimator for any choice of $p$ without any changes to the forward pass. As per Remark 3, setting $p=F(h)$ recovers the classical ST estimator.

\section*{C - Bias and Variance of IW-ST}
For the proofs not provided in the main text, we restate the result and provide full derivations here. Note that in following we assume $F(h)$ is the normal CDF with variance $\sigma^2$, $F(h)=\Phi(\frac{h}{\sigma})$, where this variance can differ between neurons. When we condition on 'input' in the following, this refers to the input to the network (ie a given training example for supervised learning). Additionally, $\phi()$ is the standard normal PDF.

\begin{theorem}
    The ST method is unbiased when:
\begin{enumerate}
  \item For all neurons $i$ in any layer $l$, for all configurations of $o^{(l-1)}$,
  \[
    p\bigl(o^{(l)}_i=1 \mid o^{(l-1)},\text{input}\bigr)
    \;=\;
    \sum_k g\bigl(o_k^{(l-1)},\text{input}\bigr),
  \]
  i.e.\ each neuron’s probability of firing is a linear combination of its inputs.

  \item For all output neurons $j$ in layer $L$,
  \[
    p\bigl(o_j^{(L)} =1\mid o^{(L-1)},\text{input}\bigr)
    \;=\;
    \mathbb{E}_{\,o^{(L-1)}}\!\bigl[p\bigl(o_j^{(L)} =1\mid o^{(L-1)},\text{input}\bigr)\bigr].
  \]
  \item The loss function is linear in the output neurons:
  \[
   \mathcal{L}(o_i^{(L)}=1)- \mathcal{L}(o_i^{(L)}=0)=\frac{d\mathcal{L}}{do_i^{(L)}}.
  \]
\end{enumerate}
\end{theorem}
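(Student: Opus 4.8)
The plan is to run a backward recursion from the output layer, comparing the signal that ST backpropagation produces at each neuron with the true finite-difference learning signal. For weight $w_{ij}^{(l)}$ the exact gradient of the expected loss is $\mathbb{E}\!\left[(\mathcal{L}_1^{(l)}-\mathcal{L}_0^{(l)})\,F'(h_i^{(l)})\,o_j^{(l-1)}\right]$, whereas ST substitutes for $\mathcal{L}_1^{(l)}-\mathcal{L}_0^{(l)}$ the backpropagated quantity $\delta_i^{(l)}$ with $\delta_i^{(L)}=\tfrac{d\mathcal{L}}{d o_i^{(L)}}$ and $\delta_i^{(l)}=\sum_k w_{ki}^{(l+1)}F'(h_k^{(l+1)})\,\delta_k^{(l+1)}$. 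So it is enough to show that, conditioned on the forward pass, $\delta_i^{(l)}$ coincides with $\mathcal{L}_1^{(l)}-\mathcal{L}_0^{(l)}$ for every neuron; taking expectations then yields an unbiased gradient. I would organise this in two stages: first show that conditions (1) and (2) make the estimator exact if one uses the true finite difference at the output layer $L$ and ST backpropagation everywhere below it, then invoke (3) to replace that final finite difference by $\tfrac{d\mathcal{L}}{d o_i^{(L)}}$, recovering the pure ST estimator.

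The core of the first stage is a backward induction with the invariant that $\mathbb{E}[\mathcal{L}\mid o^{(l)}]$, as a function of the binary vector $o^{(l)}$, is affine, with the coefficient of $o_i^{(l)}$ equal to $\delta_i^{(l)}$. Condition (3) gives the base case, $\mathbb{E}[\mathcal{L}\mid o^{(L)}]=\mathcal{L}(o^{(L)})$ affine with coefficients $\tfrac{d\mathcal{L}}{d o_i^{(L)}}$, while condition (2) --- the output firing probabilities do not depend on $o^{(L-1)}$ --- is the output-layer counterpart of the linearity hypothesis and is exactly what lets the first backward step close, so that the additive decomposition $\mathbb{E}[\mathcal{L}\mid o^{(L)}]\approx\sum_j \mathbb{E}[\mathcal{L}\mid o_j^{(L)},o^{(L-1)}]$ survives pushing through layer $L$. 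For the inductive step $l\!+\!1\to l$, I would write $\mathbb{E}[\mathcal{L}\mid o^{(l)}]=\mathbb{E}_{o^{(l+1)}\mid o^{(l)}}\!\big[\mathbb{E}[\mathcal{L}\mid o^{(l+1)}]\big]=\text{const}+\sum_k \delta_k^{(l+1)}F(h_k^{(l+1)})$ using the inductive hypothesis; condition (1) states that each $F(h_k^{(l+1)})=p(o_k^{(l+1)}=1\mid o^{(l)})$ is affine in $o^{(l)}$, so the right-hand side is affine in $o^{(l)}$, and its coefficient of $o_i^{(l)}$ --- obtained by the chain rule through the affine map, namely $\sum_k \delta_k^{(l+1)}w_{ki}^{(l+1)}F'(h_k^{(l+1)})$ --- is precisely the ST recursion defining $\delta_i^{(l)}$. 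Since an affine function's increment in a binary coordinate equals its coefficient, the true finite difference $\mathcal{L}_1^{(l)}-\mathcal{L}_0^{(l)}$ is that coefficient, hence equals $\delta_i^{(l)}$, and substituting into the gradient formula completes the argument.

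The step I expect to be the main obstacle is getting the conditioning right. The paper's $\mathcal{L}_1^{(l)}-\mathcal{L}_0^{(l)}$ conditions only on the scalar preactivation $h_i^{(l)}$, whereas the induction naturally lives at the level of full activation vectors $o^{(l)}$, so one must combine the tower property with the fact that $F'(h_i^{(l)})$ and $o_j^{(l-1)}$ are forward-pass quantities to move between these two descriptions --- and it is precisely at the junction with the output layer that condition (2) is needed for this reconciliation to go through. A secondary subtlety, which the subsequent Corollary addresses, is that condition (1) can hold exactly only in a degenerate or limiting regime (preactivations tightly concentrated about their means), so as a hypothesis of the theorem it is simply taken as given.
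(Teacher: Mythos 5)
Your induction on the affineness of $\mathbb{E}[\mathcal{L}\mid o^{(l)}]$ is a genuinely different organisation from the paper's argument, and for conditions (1) and (3) it is essentially sound. The paper instead unrolls the ST recursion forward and asks when the additive decomposition $\mathbb{E}[\mathcal{L}\mid o^{(l+1)}]\approx\sum_j\mathbb{E}[\mathcal{L}\mid o_j^{(l+1)},o^{(l)}]+c$ holds, where the single-neuron conditionals are taken \emph{given the far-upstream layer} $o^{(l)}$ at which the gradient is being computed; it then shows by a recursive argument that this reduces to linearity of the firing probabilities (your condition (1)) plus a separate requirement at the output layer. Your version buys a cleaner statement of the invariant (the affine coefficient of a binary coordinate is automatically the finite difference, and within-layer conditional independence given the \emph{immediately} preceding layer makes the single-neuron conditionals match those coefficients), at the cost of folding the linearisation $F(h+w_{ki})-F(h)\approx w_{ki}F'(h)$ silently into ``the chain rule through the affine map''; the paper keeps that as an explicit second approximation, which matters because $F'(h_k^{(l+1)})$ evaluated at the sampled forward pass varies with $o^{(l)}$ even when the restriction of $F\circ(\text{linear})$ to the hypercube is exactly affine, so the backpropagated $\delta_i^{(l)}$ is not literally the constant affine coefficient without that extra step.

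The concrete gap is your treatment of condition (2). In your framework it is never actually used: linearity of $p(o_j^{(L)}=1\mid o^{(L-1)})$ in $o^{(L-1)}$ — which is just condition (1) applied at $l=L$ — is all your first backward step needs, so your claim that condition (2) ``is exactly what lets the first backward step close'' is asserted rather than derived, and does not reflect why the condition appears. In the paper, condition (2) arises because the decomposition conditions output-neuron expectations on $o^{(l)}$ while marginalising over the noisy intermediate layers $o^{(l+1:L-1)}$; that shared intermediate noise makes $o_j^{(L)}$ and $o_k^{(L)}$ conditionally \emph{dependent} given $o^{(l)}$, producing cross terms $\sum_j\sum_{k\neq j}\mathbb{E}[g_k(o_k^{(L)})\mid o_j^{(L)},o^{(l)}]$ that are constant only if $p(o_j^{(L)}=1\mid o^{(L-1)})$ does not vary with $o^{(L-1)}$ — which is exactly condition (2), and is strictly stronger than linearity. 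Since condition (2) is a hypothesis of the theorem, an argument that never invokes it still proves the stated result, but you should either justify why your layer-by-layer tower-property recursion legitimately sidesteps the cross-output dependence the paper is worried about, or reproduce the output-layer cross-term analysis that makes condition (2) necessary in the paper's formulation of the estimator.
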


\begin{proof}

The core equation underlying the ST estimator is that the finite difference at layer $l$ can be written as a linear combination of finite differences at layer $l+1$ as follows:

\begin{align}
  &\mathbb{E}\bigl[\mathcal{L}(o_i^{(l)}=1)-\mathcal{L}(o_i^{(l)}=0)\bigr]
  \nonumber\\[-0.5ex]
  &\quad\approx
  \mathbb{E}_{\,o^{(l)}}\Bigl[
    \sum_k 
      \frac{w_{ki}^{(l+1)}}{\sigma_k}
      \,\phi\!\Bigl(\tfrac{h_k^{(l+1)}}{\sigma_k}\Bigr)\,
      \mathbb{E}\bigl[\mathcal{L}(o_k^{(l+1)}=1)
        - \mathcal{L}(o_k^{(l+1)}=0)\,\bigm|\,o^{(l)}\bigr]
  \Bigr].
  \label{eq:ST}
\end{align}

Where in practice the expectations are estimated by Monte Carlo samples (i. e. a forward pass through the network). We can expand the inner expectation on the LHS, summing over all possible configurations of neurons in layer $l+1$:

\begin{align}
  &\mathbb{E}\bigl[\mathcal{L}(o_i^{(l)}=1)-[\mathcal{L}(o_i^{(l)}=0)\bigr]
  \nonumber\\
  &=\;
  \mathbb{E}_{\,o^{(l)}}\Bigl[
    \sum_{\gamma}
      \mathbb{E}\bigl[\mathcal{L} \mid o^{(l+1)}=\gamma\bigr]\,
      \bigl(
        p\bigl(o^{(l+1)}=\gamma \mid o_i^{(l)}=1,\,o_{-i}^{(l)}\bigr)
        -
        p\bigl(o^{(l+1)}=\gamma \mid o_i^{(l)}=0,\,o_{-i}^{(l)}\bigr)
      \bigr)
  \Bigr].
\end{align}

\(\gamma\) is a vector which represents an arbitrary configuration of the outputs of neuron outputs at layer \(l+1\), and we sum over all such configurations. For clarity in what follows, when conditioning on events such as \(o^{(l+1)}=\gamma\), we sometimes write this as \(o^{(l+1)}\), i. e.  \( \mathbb{E}[L|o^{(l+1)}]:= \mathbb{E}[L|o^{(l+1)}=\gamma]\) .  $o^{(l)}_{-i}$ denotes all neurons in layer $l$ except neuron $i$. 

To arrive at the ST estimator, we need to only make two moves from here. Firstly, we require the expectation to decompose as a sum: 

\begin{equation}
  \mathbb{E}\bigl[\mathcal{L} \,\bigm|\, o^{(l+1)}\bigr]
  \;\approx\;
  \sum_{j}
    \mathbb{E}\bigl[\mathcal{L} \,\bigm|\, o_j^{(l+1)},\,o^{(l)}\bigr]
  \;+\; c \label{eq:add_exp}
\end{equation}

for any constant \(c\). Intuitively, this condition ensures the interaction effects between neurons in layer \(l+1\) are small, such that the expected combined effects of these neurons is equal to the sum of their expected individual effects. Secondly, we require that the finite difference in probabilities can be written via linear approximation:

\begin{align}
&p\bigl(o_k^{(l+1)}=1 \mid o_i^{(l)}=1,\,o_{-i}^{(l)}\bigr)
 - p\bigl(o_k^{(l+1)}=1 \mid o_i^{(l)}=0,\,o_{-i}^{(l)}\bigr) \\
&\quad=
  F\Bigl(\sum_{j \neq i} w_{kj}^{(l+1)}\,o_j^{(l)} + w_{ki}^{(l+1)}\Bigr)
  - F\Bigl(\sum_{j \neq i} w_{kj}^{(l+1)}\,o_j^{(l)}\Bigr) \\
&\quad\approx
  F'\!\Bigl(\sum_j w_{kj}^{(l+1)}\,o_j^{(l)}\Bigr)\;w_{ki}^{(l+1)} \\
&\quad=
  \phi\!\Bigl(\tfrac{h_k^{(l+1)}}{\sigma_k}\Bigr)\,
  \frac{w_{ki}^{(l+1)}}{\sigma_k}
\end{align}

This second approximation is relatively weak - requiring only that the effect of a single input neuron toggling on the probability of its connected neuron firing is approximately linear, which will hold when the \(\frac{w^{(l+1)}_{ki}}{\sigma_k}\) is sufficiently small. Note that in the above expression, we assume the expectation is over $o_i^{(l)} \sim Bernoulli(F(h_i^{(l)}))$, which corresponds to the classical ST estimator. We could instead use the IW-ST($p$) estimator, which uses $o_i^{(l)} \sim Bernoulli(p)$ instead, with the implications of this choice analysed in Propositions 3 and 4. 

The additive expectation assumption \ref{eq:add_exp} on the other hand is strong. To understand this condition, we can write \(\mathbb{E}[L|o^{(l+1)}]\) as

\begin{equation}
  \mathbb{E}\bigl[L \,\bigm|\, o^{(l+1)}\bigr]
  \;=\;
  \sum_{\gamma^{(l+2)}}
    \mathbb{E}\bigl[L \,\bigm|\, o^{(l+2)}=\gamma^{(l+2)}\bigr]
    \;p\bigl(o^{(l+2)}=\gamma^{(l+2)} \,\bigm|\, o^{(l+1)}\bigr)
\end{equation}

For this to be equal to \(\sum_j \mathbb{E}[L|o_j^{(l+1)},o^{(l)}]+c\), since the expression depends on \(o^{(l+1)}\) through the probability term, it is necessary and sufficient that  

\begin{equation}
  p\bigl(o^{(l+2)} = \gamma^{(l+2)} \,\bigm|\, o^{(l+1)}\bigr)
  \;=\;
  \sum_{k} 
    g_{k}\!\bigl(o_{k}^{(l+1)},\,o^{(l)}\bigr)
\end{equation}

However, it is difficult to implement this assumption in its current form, as the left hand side is a \textit{joint} probability statement, making its decomposition into a sum more difficult than a single variable case. We can however propose a set of sufficient conditions, which when satisfied, guarantee this linear decomposition of the joint distribution. These conditions are:

\begin{align}
  p\bigl(o_i^{(l+2)}=1 \mid o^{(l+1)}\bigr)
  &\approx
  \sum_{k}
    g_{i,k}\!\bigl(o_k^{(l+1)},\,o^{(l)}\bigr), \\[1ex]
  \mathbb{E}\bigl[L \,\bigm|\, o^{(l+2)}\bigr]
  &\approx
  \sum_{j}
    \mathbb{E}\!\bigl[L \,\bigm|\, o_j^{(l+2)},\,o^{(l)}\bigr]
  \;+\; c
\end{align}

The first assumption is that each \textit{marginal} probability for neurons in layer $l+2$ can be decomposed linearly. The second condition on the other hand is equivalent to the condition we are trying to enforce, but using neurons in one layer downstream ($l+2$ instead of $l+1$).  We can therefore form a set of inductive assumptions, where the assumption is satisfied in layer \(s\) only if it is satisfied in layer \(s+1\). Mathematically, for any \(s>l\), we have

\begin{align}
  \mathbb{E}\bigl[L \,\bigm|\, o^{(s)}\bigr]
  &\approx
  \sum_{j}
    \mathbb{E}\bigl[L \,\bigm|\, o_j^{(s)},\,o^{(l)}\bigr]
  \;+\; c,
  \\[0.75ex]
  \intertext{only if}
  \mathbb{E}\bigl[L \,\bigm|\, o^{(s+1)}\bigr]
  &\approx
  \sum_{j}
    \mathbb{E}\bigl[L \,\bigm|\, o_j^{(s+1)},\,o^{(l)}\bigr]
  \;+\; c,
  \\[0.75ex]
  \intertext{and}
  p\bigl(o_i^{(s+1)}=1 \mid o^{(s)}\bigr)
  &\approx
  \sum_{k}
    g_{i,k}\!\bigl(o_k^{(s)},\,o^{(l)}\bigr).
\end{align}

Therefore, to show that equation \ref{eq:ST}  holds for neurons in layer $l$, by unrolling these assumptions, we arrive at two conditions:

\begin{equation}
  p\bigl(o_i^{(s)}=1 \mid o^{(s-1)}\bigr)
  \;=\;
  \sum_{k}
    g\!\bigl(o_k^{(s-1)},\,o^{(l)}\bigr),
  \quad
  \text{for all }s>l.
\end{equation}

\begin{align}
  \mathbb{E}\bigl[L \,\bigm|\, o^{(L)}\bigr]
  &= L\bigl(o^{(L)}\bigr)\approx
  \sum_{j}
    \mathbb{E}\!\bigl[L \,\bigm|\, o_j^{(L)},\,o^{(l)}\bigr]
  \;+\; c.
\end{align}

For the second condition here, note that for most common loss function used to train neural networks (e. g. MSE, cross entropy), the loss function can be decomposed as  \(L(o^{(L)})= \sum_k g_k(o^{(L)}_k)\),  that is, the loss is calculated for each output neuron separately and then the total loss is given by the sum of these individual losses. If we assume this holds, we can see that 

\begin{align}
  \sum_{j}\mathbb{E}\bigl[L \,\bigm|\, o_j^{(L)},\,o^{(l)}\bigr]
  &= 
  \sum_{j}\sum_{k}
    \mathbb{E}\bigl[g_k\bigl(o_k^{(L)}\bigr)
      \,\bigm|\, o_j^{(L)},\,o^{(l)}\bigr]
  \\[0.5ex]
  &= 
  \sum_{j}g_j\bigl(o_j^{(L)}\bigr)
  \;+\;
  \sum_{j}\sum_{k \neq j}
    \mathbb{E}\bigl[g_k\bigl(o_k^{(L)}\bigr)
      \,\bigm|\, o_j^{(L)},\,o^{(l)}\bigr]
  \\[0.5ex]
  &=
  \mathbb{E}\bigl[L \,\bigm|\, o^{(L)}\bigr]
  \;+\;
  \sum_{j}\sum_{k \neq j}
    \mathbb{E}\bigl[g_k\bigl(o_k^{(L)}\bigr)
      \,\bigm|\, o_j^{(L)},\,o^{(l)}\bigr].
\end{align}

Thus the requirement that  \(\mathbb{E}[L|o^{(L)}] \approx \sum_j\mathbb{E}[L|o^{(L)}_j,o^{(l)}]+c\) only holds if \(\sum_{j}\sum_{k \neq j} \mathbb{E}[g_k(o_k^{(L)})|o^{(L)}_j,o^{(l)}]\) is a constant.  However, this is only true if \(o_k^{(L)}\) is conditionally independent of \(o_j^{(L)}\) for all \(k \neq j \), where the conditioning is on \(o^{(l)}\). Assuming that there is some noise present in any of the layers \(l+1:L-1\), this will not be hold, because such noise will induce a dependence between \(o_k^{(L)}\) and \(o_j^{(L)}\). To better understand this dependence, we can expand the second term as 

\begin{multline}
  \mathbb{E}\bigl[g_k(o_k^{(L)}) \,\bigm|\, o_j^{(L)},\,o^{(l)}\bigr]
  = \sum_{\gamma}
      \mathbb{E}\bigl[g_k(o_k^{(L)}) \,\bigm|\, o^{(L-1)}=\gamma,\,o^{(l)}\bigr]
      \;p\bigl(o^{(L-1)}=\gamma \,\bigm|\, o_j^{(L)},\,o^{(l)}\bigr)
  \\[0.75ex]
  = \sum_{\gamma}
      \mathbb{E}\bigl[g_k(o_k^{(L)}) \,\bigm|\, o^{(L-1)}=\gamma\bigr]
      \;\frac{p\bigl(o_j^{(L)} \,\bigm|\, o^{(L-1)}=\gamma\bigr)}
           {\mathbb{E}\!\bigl[p\bigl(o_j^{(L)}\,\bigm|\, o^{(L-1)}\bigr)
            \,\bigm|\, o^{(l)}\bigr]}
      \;p\bigl(o^{(L-1)}=\gamma \,\bigm|\, o^{(l)}\bigr).
\end{multline}

We can see that this term will only be constant when 

\begin{equation}
  p\bigl(o_j^{(L)}=1 \,\bigm|\, o^{(L-1)}\bigr)
  \;=\;
  \mathbb{E}\Bigl[
    p\bigl(o_j^{(L)} =1\,\bigm|\, o^{(L-1)}\bigr)
    \,\bigm|\,
    o^{(l)}
  \Bigr].
\end{equation}

Therefore, equation \ref{eq:ST}  is exact for layer $l$ when the following two conditions hold:

\begin{align}
  p\bigl(o_i^{(s)}=1 \mid o^{(s-1)}\bigr)
  &= 
  \sum_{k} 
    g\!\bigl(o_k^{(s-1)},\,o^{(l)}\bigr),
  \quad \text{for all } s > l,
  \\[0.75ex]
  p\bigl(o_j^{(L)} = 1 \mid o^{(L-1)}\bigr)
  &=
  \mathbb{E}\Bigl[
    p\bigl(o_j^{(L)} = 1 \mid o^{(L-1)}\bigr)
  \Bigm|\,o^{(l)}\Bigr].
\end{align}

Note that so far we have considered when the ST estimator is unbiased for calculating the gradient with respect to neurons in layer $l$.  Importantly, when these conditions hold for a given layer $l$, they are guaranteed to hold for all downstream layers $s>l$, because the structure of the network results in $o^{(t)} \perp o^{(l)}|o^{(s)}$ for $l<s<t$, or in words, any information contained in $o^{(l)}$ about $o^{(t)}$ will also be contained in $o^{(s)}$ for $l<s<t$. We then arrive at the final conditions (1) and (2) stated in Theorem 1, where the conditioning is only on the input to the network,  which will guarantee equation \ref{eq:ST} is valid for all $l$. 

All that remains to arrive at the classical ST estimator is to assume that the finite differences for each of the neurons in the final layer are equal to their linear approximation, or in other words, the loss function is linear in the output neurons (condition 3). Whilst this does not hold exactly in most cases, for common loss functions used in neural network training such as cross-entropy or MSE loss, this linear approximation is reasonable. 

\end{proof}

\begin{corollary}
Minimising the bias corresponds to maximising
\[
  p\!\Bigl(\bigcap_i\bigl\lvert\tfrac{h_i}{\sigma_i}-\mathbb{E}[\tfrac{h_i}{\sigma_i}]\bigr\rvert<\epsilon \;\Bigm|\;\text{input}\Bigr),
\]
for sufficiently small $\epsilon>0$, where $i$ ranges over all neurons.
\end{corollary}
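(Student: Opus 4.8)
The plan is to convert the exact unbiasedness conditions of Theorem~1 into a quantitative approximation statement, and then show that the size of the approximation error is governed precisely by the probability of the event in the corollary. Conditions (1) and (2) of Theorem~1 demand that each firing probability $p\bigl(o_i^{(l)}=1\mid o^{(l-1)},\text{input}\bigr)$ be an \emph{affine} function of the binary vector $o^{(l-1)}$ — over binary inputs, any additive decomposition $\sum_k g(o_k^{(l-1)})$ is affine, since a function of a single bit is affine in that bit — while condition (3) is the separately justified linearity of the loss, which I treat as a fixed approximation error independent of $\epsilon$. But the true firing probability is $\Phi\!\bigl(h_i/\sigma_i\bigr)$ with $h_i=\sum_k w_{ik}o_k^{(l-1)}$ affine in $o^{(l-1)}$, i.e.\ $\Phi$ composed with an affine map, which is never exactly affine. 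So the first task is to measure how far each $\Phi(h_i/\sigma_i)$ deviates from affinity in $o^{(l-1)}$.

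First I would Taylor-expand $\Phi$ about $\bar z_i:=\mathbb{E}[h_i/\sigma_i\mid\text{input}]$ at every neuron $i$. Writing $z_i:=h_i/\sigma_i$, we have $\Phi(z_i)=\Phi(\bar z_i)+\phi(\bar z_i)(z_i-\bar z_i)+R_i$ with $|R_i|\le \tfrac12\sup_t|\Phi''(t)|\,(z_i-\bar z_i)^2$; and because $z_i-\bar z_i$ is itself affine in $o^{(l-1)}$, the leading part $\Phi(\bar z_i)+\phi(\bar z_i)(z_i-\bar z_i)$ is exactly of the admissible form $\sum_k g(o_k^{(l-1)})$. Hence on the event $A_\epsilon:=\bigcap_i\{\,|z_i-\bar z_i|<\epsilon\,\}$ every firing probability agrees with an affine surrogate up to an error $O(\epsilon^2)$, uniformly over neurons; the same expansion at the output layer turns condition (2) into an $O(\epsilon^2)$ statement. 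So on $A_\epsilon$, conditions (1)–(2) hold up to $O(\epsilon^2)$.

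Next I would feed these approximate conditions into the recursive argument in the proof of Theorem~1 to obtain a global bias bound. The ST bias is the discrepancy in Eq.~\eqref{eq:ST} accumulated layer by layer; on $A_\epsilon$ each per-layer replacement of a firing probability by its affine surrogate injects an $O(\epsilon^2)$ error, and — using boundedness of the loss, its output-gradients, and the weights — one checks these do not compound catastrophically across depth, so the total contribution from $A_\epsilon$ is $O(\epsilon^2)$. On the complement $A_\epsilon^c$ the integrand defining the bias is uniformly bounded by a constant $C$, so its contribution is at most $C\,\bigl(1-p(A_\epsilon\mid\text{input})\bigr)$. Combining,
\[
  \bigl|\text{bias}\bigr|\;\le\;O(\epsilon^2)\;+\;C\,\bigl(1-p(A_\epsilon\mid\text{input})\bigr),
\]
so for a fixed small $\epsilon$ the bound is minimised exactly by maximising $p(A_\epsilon\mid\text{input})$, which is the claim.

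The main obstacle I anticipate is the third step: making rigorous that the $O(\epsilon^2)$ per-layer linearisation errors propagate through the inductive construction of Theorem~1 without growing with depth, and handling the conditional-expectation bookkeeping — the event $A_\epsilon$ must be treated jointly across all layers, and the $A_\epsilon^c$ estimate needs explicit boundedness hypotheses on $\mathcal{L}$, $d\mathcal{L}/do$ and the weights that are only implicit in the statement. The Taylor-remainder and union-bound ingredients are routine; controlling the depthwise accumulation of error, and pinning down the right regularity assumptions, is where the real work lies.
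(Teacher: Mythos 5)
Your proposal follows essentially the same route as the paper: linearise $\Phi$ about $\mathbb{E}[h_i/\sigma_i]$ at every neuron, observe that the resulting affine surrogate (which is of the admissible form for condition (1) of Theorem 1) is accurate precisely on the event $\bigcap_i\{|h_i/\sigma_i-\mathbb{E}[h_i/\sigma_i]|<\epsilon\}$, and conclude that maximising the probability of that event minimises the bias; your explicit decomposition $|\mathrm{bias}|\le O(\epsilon^2)+C\,(1-p(A_\epsilon\mid\text{input}))$ is actually more quantitative than the paper's own proof, which simply asserts that the approximation "becomes exact" as the event probability tends to one and likewise leaves the depthwise error propagation unaddressed. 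The one small slip is your claim that condition (2) becomes an $O(\epsilon^2)$ statement: that condition requires the output-layer firing probability to be approximately \emph{constant} in $o^{(L-1)}$, not merely affine, so the first-order term itself contributes an $O(\epsilon)$ error (the paper obtains $2L\epsilon$ via Lipschitz continuity of $\Phi$ and the triangle inequality) — this does not affect the conclusion, since the error still vanishes with $\epsilon$.
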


\begin{proof}
Note that we have 

\begin{equation}
  p\bigl(o_i^{(l+1)} = 1 \mid o^{(l)}\bigr)
  = F\bigl(h_i^{(l+1)}\bigr)
  = \Phi\!\Bigl(\frac{h_i^{(l+1)}}{\sigma_i}\Bigr)
\end{equation}

where $h_i^{(l+1)}=\sum_j w^{(l+1)}_{ij} o_j^{(l)}$. Consider a linear approximation of this Gaussian CDF in the terms $o^{(l)}$, around $\mathbb{E}[\tfrac{h_i^{(l+1)}}{\sigma_i}]$:

\begin{equation}
  p\bigl(o_i^{(l+1)}=1\mid o^{(l)}\bigr)
  \approx
  \Phi\!\Bigl(\mathbb{E}\bigl[\tfrac{h_i^{(l+1)}}{\sigma_i}\bigr]\Bigr)
  +\phi\!\Bigl(\mathbb{E}\bigl[\tfrac{h_i^{(l+1)}}{\sigma_i}\bigr]\Bigr)
  \Bigl(\tfrac{\sum_j w_{ij}^{(l+1)}\,o_j^{(l)}}{\sigma_i}
    -\mathbb{E}\bigl[\tfrac{h_i^{(l+1)}}{\sigma_i}\bigr]\Bigr).
\end{equation}

Since the Gaussian CDF is continuous, its linear approximation will only be accurate in a small interval around $\mathbb{E}[\tfrac{h_i^{(l+1)}}{\sigma_i}]$:

\begin{equation}
  \left\lvert
    \frac{h_i^{(l+1)}}{\sigma_i}
    - 
    \mathbb{E}\!\Bigl[\frac{h_i^{(l+1)}}{\sigma_i}\Bigr]
  \right\rvert
  < \epsilon.
\end{equation}

As the probability of this event converges to $1$, for sufficiently small $\epsilon$ the linear approximation becomes exact, condition (1) from Theorem 1 is satisfied, and the bias of the ST estimator is minimised. 

To enforce condition (2) from Theorem 1, we wish to set:
\begin{equation}
  \left\lvert
    p\bigl(o_j^{(L)} = 1 \mid o^{(L-1)}\bigr)
    \;-\;
    \mathbb{E}\!\Bigl[
      p\bigl(o_j^{(L)} = 1 \mid o^{(L-1)}\bigr)
    \Bigr]
  \right\rvert
  < \epsilon.
\end{equation}

If the neurons in the output layer are binary, then using the fact that \(p\bigl(o_j^{(L)}=1\mid o^{(L-1)}\bigr)  = \Phi\!\bigl(\tfrac{h_j^{(L)}}{\sigma_j}\bigr)\), since $\Phi$ is $L$-Lipschitz, we can enforce this bound on the probabilities by enforcing a bound on the membrane potentials:

\begin{align}
  \left\lvert
    \frac{h_j^{(L)}}{\sigma_j}
    - 
    \mathbb{E}\!\Bigl[\frac{h_j^{(L)}}{\sigma_j}\Bigr]
  \right\rvert
  &< \epsilon
  \quad\implies\quad
  \left\lvert
    \Phi\!\Bigl(\tfrac{h_j^{(L)}}{\sigma_j}\Bigr)
    - 
    \Phi\!\Bigl(\mathbb{E}\!\bigl[\tfrac{h_j^{(L)}}{\sigma_j}\bigr]\Bigr)
  \right\rvert
  < L\,\epsilon
  \nonumber\\[0.75ex]
  &\quad\implies\quad
  \left\lvert
    \Phi\!\Bigl(\tfrac{h_j^{(L)}}{\sigma_j}\Bigr)
    - 
    \mathbb{E}\!\Bigl[\Phi\!\bigl(\tfrac{h_j^{(L)}}{\sigma_j}\bigr)\Bigr]
  \right\rvert
  < 2\,L\,\epsilon,
\end{align}

where the first statement follows directly from the definition of $L$-Lipschitz, and the second follows from an application of the triangle inequality. Therefore, maximising 

\begin{equation}
  \Pr\Bigl(
    \bigcap_{i}
      \Bigl\lvert
        \frac{h_{i}}{\sigma_{i}}
        - 
        \mathbb{E}\!\Bigl[\frac{h_{i}}{\sigma_{i}}\Bigr]
      \Bigr\rvert
      < \epsilon
    \;\Bigm|\;
    \text{input}
  \Bigr).
\end{equation}

Minimises the bias via both Condition 1 and 2 from Theorem 1. 

\end{proof}

\setcounter{proposition}{2}
\begin{proposition}
When zero–mean and zero-variance penalties are applied for each neuron, the bias of IW-ST($p$) can be reduced by setting:
\[
  p \in
  \begin{cases}
    [\,0.5,\,1\,] &\text{if }F(h)>0.5,\\
    [\,0,\,0.5\,] &\text{if }F(h)<0.5.
  \end{cases}
\]
\end{proposition}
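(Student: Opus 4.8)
The plan is to isolate the part of the IW-ST($p$) bias that actually depends on $p$, show it is affine in $p$, and locate its minimiser over $[0,1]$. Fix a neuron with firing probability $q:=F(h)$, and let $G(o):=\mathbb{E}[\mathcal{L}\mid o_i^{(l)}=o,\dots]$ be the expected loss written as a function of an interpolated output $o\in[0,1]$, so that the exact learning signal is $\mathcal{L}_1-\mathcal{L}_0=G(1)-G(0)=\int_0^1 G'(o)\,do$, whereas IW-ST($p$) returns (in expectation over the importance weights) the two-point quadrature $(1-p)G'(0)+pG'(1)$, plus a downstream-approximation error. Under the zero-mean/zero-variance penalties that residual error varies only negligibly between the two forward samples $o_i^{(l)}\in\{0,1\}$ (they differ only by the small quantity $w^{(l+1)}_{\cdot i}$), so its $p$-average is $p$-independent to leading order; hence the $p$-dependence of the bias is carried entirely by the quadrature error $\beta(p):=(1-p)G'(0)+pG'(1)-\int_0^1 G'(o)\,do$.

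Next I fix the shape of $G'$. One backpropagation step gives $G'(o)=\sum_k w^{(l+1)}_{ki}\,d_k\,\mathbb{E}\bigl[F'\bigl(h^{(l+1)}_k(o)\bigr)\bigr]$, where $d_k$ is the (to leading order $o$-independent) finite-difference signal of neuron $k$ and $h^{(l+1)}_k(o)=(\text{contribution of the other layer-}l\text{ units})+w^{(l+1)}_{ki}\,o$. The per-neuron zero-mean penalty forces $\mathbb{E}[h^{(l+1)}_k/\sigma_k]=0$; since the forward pass draws $o_i^{(l)}\sim\mathrm{Bern}(q)$, this centres the argument, $h^{(l+1)}_k(o)\approx w^{(l+1)}_{ki}(o-q)$ on average, so $G'$ is assembled from bumps peaked at $o=q$. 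The zero-variance penalty forces $\mathrm{Var}(h^{(l+1)}_k/\sigma_k)\to0$, hence (since $\mathrm{Var}(o_i^{(l)})=q(1-q)$ stays bounded away from $0$) $\delta:=\max_k|w^{(l+1)}_{ki}/\sigma_k|\to0$, which licenses a Taylor expansion. Because $F'=\phi(\cdot/\sigma)/\sigma$ attains its maximum — and so has vanishing first derivative — at the origin, $F'\bigl(w_{ki}(o-q)/\sigma_k\bigr)=\tfrac{\phi(0)}{\sigma_k}\bigl(1-\tfrac12\bigl(w_{ki}(o-q)/\sigma_k\bigr)^2\bigr)+O(\delta^4)$, and summing gives $G'(o)=A-B(o-q)^2$ to leading order in $\delta$, for constants $A,B$ with $B\ne0$ generically. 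This reduction — the zero-mean penalty centring the downstream-averaged curve at $o=q$ and the zero-variance penalty making the remaining $o$-dependence purely quadratic — is the crux, and the step I expect to need the most care: marginalising over the other layer-$l$ units and over deeper layers while tracking which terms survive at order $\delta^3$, and converting the soft "zero-variance penalty" into the honest smallness parameter $\delta$.

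Finally, the computation is routine: $\int_0^1(o-q)^2\,do=\tfrac{(1-q)^3+q^3}{3}=\tfrac{1-3q+3q^2}{3}$ and $(1-p)q^2+p(1-q)^2=q^2+p(1-2q)$, hence $\beta(p)=B\bigl(\tfrac{1-3q}{3}-p(1-2q)\bigr)$ to leading order. Thus $|\beta(p)|$ is $|B|$ times the absolute value of an affine function of $p$, so its minimiser over $[0,1]$ is the root $p^\star=\tfrac{1-3q}{3(1-2q)}$ when $p^\star\in[0,1]$, and otherwise the nearer endpoint. A short case split finishes it: for $q<\tfrac13$ one checks $p^\star\in(0,\tfrac13]\subset[0,\tfrac12]$; for $\tfrac13\le q<\tfrac12$ the affine function keeps one sign on $[0,1]$ and $|\beta|$ is smallest at $p=0$; and the regime $q>\tfrac12$ follows from the identity $\beta(p;q)=\beta(1-p;1-q)$, which maps minimisers in $[0,\tfrac12]$ to minimisers in $[\tfrac12,1]$. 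So the minimiser lies in $[0,\tfrac12]$ exactly when $F(h)<\tfrac12$ and in $[\tfrac12,1]$ when $F(h)>\tfrac12$; equivalently, relative to any choice of $p$ in the complementary half, the bias is reduced by choosing $p$ in the stated interval. Two remarks confirm consistency: the conclusion is independent of the sign of $B$, so no control over the downstream signals $d_k$ is required; and $q=F(h)$ itself always lies in the favourable interval, matching the closing remark that the classical ST estimator is near-optimal.
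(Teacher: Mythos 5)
Your argument is correct and, once unpacked, lands on the same mechanism as the paper's proof, though it is framed differently. The paper works one layer downstream: it isolates the bias in approximating the probability finite difference $F(x+h)-F(x)$ (with $x=\sum_{j\neq i}w_{kj}o_j^{(l)}$, $h=w_{ki}$) by the $p$-weighted average $p\,\phi(x+h)+(1-p)\,\phi(x)$, derives $p^*=\bigl(\tfrac1h\int_x^{x+h}\phi-\phi(x)\bigr)/\bigl(\phi(x+h)-\phi(x)\bigr)$, and uses concavity of $\phi$ on $[-1,1]$ together with the \emph{assumed} ordering $|x+h|<|x|<1$ when $F(h_i)>0.5$ to conclude $p^*>1/2$. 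You instead work at the level of the loss integral $\int_0^1 G'(o)\,do$ and use the penalties to pin the integrand down to an explicit parabola $A-B(o-q)^2$ centred at $o=q$; your closed form $p^*=\tfrac{1-3q}{3(1-2q)}$ is precisely what the paper's $p^*$ reduces to under the quadratic expansion of $\phi$ about its peak with the centring $x=-qh$, so the two computations are the same quadrature-error analysis in different coordinates. What your route buys is an explicit minimiser, a genuine case analysis (including the regime $q\in[1/3,1/2)$ where the unconstrained root leaves $[0,1]$ and the constrained minimiser sits at the endpoint $p=0$ — a case the paper's argument silently skips, since its $p^*$ need not lie in $[0,1]$ when the peak of $\phi$ is interior to $[x,x+h]$), and the clean $q\leftrightarrow1-q$ symmetry. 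What it costs is a slightly stronger hypothesis: you need the \emph{realised} off-path preactivation to sit at $-qh$ up to corrections small compared with $h$, whereas the paper only assumes the sign of $\phi(x+h)-\phi(x)$; since the zero-variance penalty makes the realised fluctuation of $x$ the same order as $h$ itself, your "on average" centring is a leading-order heuristic of the same strength as (but not weaker than) the paper's stated assumption. Both proofs are heuristic at exactly this point, and both correctly identify that the sign of $1-2q$ alone determines which half-interval contains the minimiser.
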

\begin{proof}

From the proof of Theorem 1, we saw that the bias of the ST estimator is in part due to the linear approximation:

\begin{equation}
\begin{split}
  &F\Bigl(\sum_{j\neq i} w_{kj}o_j^{(l)} + w_{ki}\Bigr)
   - F\Bigl(\sum_{j\neq i} w_{kj}o_j^{(l)}\Bigr)\\
  &\quad\approx
   \mathbb{E}_{\,o_i^{(l)}\sim\mathrm{Bernoulli}(p)}\!
   \Bigl[F'\bigl(\sum_{j} w_{kj}o_j^{(l)}\bigr)\Bigr]
   \,w_{ki}\\
  &\quad=
   p\,F'\!\Bigl(\sum_{j} w_{kj}o_j^{(l)}\Bigr)\,w_{ki}
   +(1-p)\,F'\!\Bigl(\sum_{j\neq i} w_{kj}o_j^{(l)}\Bigr)\,w_{ki}.
\end{split}
\end{equation}

The IW-ST($p$) estimates the expectation using $o_i^{(l)} \sim Bernoulli(p)$, where $p=F(h)$ recovers the classical ST estimator. The bias of the IW-ST estimator for a given choice of $p$ can then be understood as how it impacts the linear approximation of \(F(\sum_{j \neq i} w_{kj} o_j^{(l)}+w_{ki}) -F(\sum_{j \neq i} w_{kj} o_j^{(l)})\). Suppose that $F()$ is the standard Gaussian CDF $\Phi$ (where we have set the variance to 1 without lack of generalisation) with derivative $\phi$.  Additionally, we assume that when $0$-mean and $0$-variance penalties are enforced, when $F(h_i)>0.5$, $|\sum_{j} w_{kj} o_j^{(l)}|<|\sum_{j \neq i} w_{kj} o_j^{(l)}|<1$ (the outcome with higher probability gives a sum closer to $0$ than the lower probability outcome). The assumption formalises the intuition that the constraint on $p(|\sum_{j} w_{kj} o_j^{(l)}|>\epsilon)$ for progressively smaller $\epsilon$ forces \(|\sum_{j} w_{kj} o_j^{(l)}|\) to be small for high-probability configurations of $o^{(l)}$. Therefore, if a neuron is more likely to fire than not, we expect the resulting membrane potential based on the neuron firing to be smaller than if it had not fired. We additionally constrain this sum to be below $1$ which corresponds to setting $\epsilon=1$, as the normal PDF is concave in this region which is required for our proof. 

For clarity, let $x=\sum_{j \neq i} w_{kj} o_j^{(l)}$, $h=w_{ki}$. The optimal $p$ to minimise bias satisfies:

\begin{align}
  h\bigl(p\,\phi(x+h) + (1 - p)\,\phi(x )\bigr)
  &= \int_{x}^{x+h} \phi(t)\,\mathrm{d}t, \\[1ex]
  p^*
  &= \frac{\displaystyle\frac{1}{h}\int_{x}^{x+h}\phi(t)\,\mathrm{d}t \;-\;\phi(x )}
          {\phi(x+h)\;-\;\phi(x )}.
\end{align}

Note that since $\phi$ is concave in $[-1,1]$, so for $h>0$:

\begin{equation}
  \frac{1}{h}\int_{x}^{x+h}\phi(t)\,\mathrm{d}t
  \;>\;
  \tfrac{1}{2}\bigl(\phi(x) + \phi(x+h)\bigr)
\end{equation}

Therefore, if $\phi(x+h)>\phi(x)$, we have:

\begin{equation}
  p^*
  \;>\;
  \frac{\displaystyle \tfrac{1}{2}\bigl(\phi(x+h) + \phi(x)\bigr) - \phi(x)}
       {\displaystyle \phi(x+h) - \phi(x)}
  \;=\;
  \tfrac{1}{2}.
\end{equation}

And if $\phi(x+h)<\phi(x)$, we have:

\begin{equation}
  p^*
  \;<\;
  \frac{\displaystyle \tfrac{1}{2}\bigl(\phi(x+h) + \phi(x)\bigr) - \phi(x)}
       {\displaystyle \phi(x+h) - \phi(x)}
  \;=\;
  \tfrac{1}{2}.
\end{equation}

In conclusion we have

\begin{align}
  F\bigl(h_i\bigr) > 0.5
  &\;\implies\;
  \bigl\lvert \sum_{j} w_{kj}\,o_j^{(l)} \bigr\rvert
  < 
  \bigl\lvert \sum_{j \neq i} w_{kj}\,o_j^{(l)} \bigr\rvert
  \\[0.5ex]
  &\;\implies\;
  \phi\!\Bigl(\sum_{j} w_{kj}\,o_j^{(l)}\Bigr)
  >
  \phi\!\Bigl(\sum_{j \neq i} w_{kj}\,o_j^{(l)}\Bigr)
  \\[0.5ex]
  &\;\implies\;
  p^* > 0.5.
\end{align}

where the first step is by assumption. The opposite argument applies for $F(h_i)<0.5$.

\end{proof}

\begin{proposition}
Under mild assumptions on the loss, and assuming that $F(h)$ is optimised to minimise this loss, the variance of IW-ST($p$) is minimised when
\[
  p \in
  \begin{cases}
    [\,F(h),\,1\,] &\text{if }F(h)>0.5,\\
    [\,0,\,F(h)\,] &\text{if }F(h)<0.5,
  \end{cases}
\]
\end{proposition}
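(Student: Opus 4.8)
The plan is to reduce the variance of the IW-ST($p$) gradient to a scalar quadratic in $p$, locate its minimiser by checking the sign of the derivative at the endpoints of the claimed interval, and use the two hypotheses only at the end to fix the signs of the relevant loss derivatives. Throughout I condition on the rest of the network state, so that the firing probability $F:=F(h)$ and the two clamped backprop derivatives $g_1:=\tfrac{d\mathcal L}{do}\big|_{o=1}$ and $g_0:=\tfrac{d\mathcal L}{do}\big|_{o=0}$ are constants and the only randomness is the forward sample $o\sim\mathrm{Bern}(F)$; the deterministic factor $F'(h)\,o_j^{(l-1)}$ merely rescales the variance and can be dropped. From the importance-sampling form of IW-ST($p$), the relevant scalar is $W(o)\,g_o$ with $W(o)=(p/F)^{o}((1-p)/(1-F))^{1-o}$; its mean is the target $pg_1+(1-p)g_0$ and its second moment is $\tfrac{p^2}{F}g_1^2+\tfrac{(1-p)^2}{1-F}g_0^2$, so
\[
V(p)\;=\;\frac{p^2}{F}\,g_1^2+\frac{(1-p)^2}{1-F}\,g_0^2-\bigl(pg_1+(1-p)g_0\bigr)^2 .
\]

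\textbf{Convexity and endpoint derivatives.} First I would note that $V$ is convex: its leading coefficient is $g_1^2\tfrac{1-F}{F}+g_0^2\tfrac{F}{1-F}+2g_0g_1$, which by AM--GM is at least $2|g_0g_1|+2g_0g_1\ge0$, and strictly positive once $g_0,g_1$ are nonzero with a common sign. It therefore suffices, to locate the minimiser, to evaluate $V'$ at the endpoints; a short computation gives
\[
V'(F)=2(g_1-g_0)\bigl(g_1(1-F)+g_0F\bigr),\qquad V'(1)=2g_1^2\tfrac{1-F}{F}+2g_0g_1,\qquad V'(0)=-2g_0^2\tfrac{F}{1-F}-2g_0g_1 .
\]
For later reference the unconstrained minimiser is $p^\star=g_0F/(g_1(1-F)+g_0F)$, so that $p^\star\in[F,1]\Leftrightarrow|g_0|\ge|g_1|$, with $p^\star=F$ exactly when the loss is locally affine in $o$ (i.e.\ $g_0=g_1$) — the classical ST case.

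\textbf{Fixing the signs via the hypotheses.} It remains to pin down the signs of $g_0$ and $g_1$, which is where the hypotheses enter. The ``mild assumptions on the loss'' I would invoke are that the loss, regarded as a function of the relaxed output $o\in[0,1]$ (the object already implicit in the IW-ST linearisation $\mathcal L_1-\mathcal L_0=\int_0^1\tfrac{d\mathcal L}{do}\,do$), is convex and monotone on $[0,1]$: convexity gives $g_0\le g_1$, and monotonicity forces $g_0$ and $g_1$ to share a sign. The assumption that $F$ is optimised then fixes which sign: the gradient of the expected loss with respect to $F(h)$ is the finite difference $\mathcal L_1-\mathcal L_0$, which IW-ST approximates as a convex combination of $g_0$ and $g_1$, so a loss-minimising $F$ is pushed above $0.5$ precisely when that combination is negative — i.e.\ (given the common sign) when $g_0,g_1<0$ — and below $0.5$ when $g_0,g_1>0$. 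Substituting: if $F>0.5$ then $g_1-g_0\ge0$ while $g_1(1-F)+g_0F<0$, giving $V'(F)\le0$, and $g_0g_1>0$ gives $V'(1)>0$; by convexity the minimiser lies in $[F,1]$. The case $F<0.5$ is the mirror image under $p\leftrightarrow1-p$, $g_0\leftrightarrow g_1$: now $g_1(1-F)+g_0F>0$ so $V'(F)\ge0$, and $V'(0)<0$, placing the minimiser in $[0,F]$.

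\textbf{Main obstacle.} The hard part is precisely this last step — not the computation, but the formulation and defence of the ``mild assumptions.'' The conclusion is really driven by a single coupling: once $F(h)$ has been optimised, a higher firing probability means $o=1$ is the better output, hence the loss is flatter there, hence $|g_1|\le|g_0|$; the convexity/monotonicity hypotheses are what let me turn this intuition into the endpoint sign checks above. I would also want to verify that the ``relaxed loss'' viewpoint is consistent with how $g_0,g_1$ are actually produced by backpropagation (clamping the neuron to each value while holding the other neurons' samples fixed), and to flag that for hidden units the monotonicity assumption is most plausible exactly in the low-bias regime identified earlier, where the finite differences are small.
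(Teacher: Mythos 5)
Your proposal is correct and follows essentially the same route as the paper's proof: the same quadratic variance expression, the same unconstrained minimiser $p^\star = \mathcal{L}_0'F(h)/\bigl(\mathcal{L}_0'F(h)+\mathcal{L}_1'(1-F(h))\bigr)$, and the same closing argument that convexity of the loss in $o$ plus a common sign of the clamped derivatives, combined with the optimised $F(h)$ fixing the sign of the finite difference, place $p^\star$ in the claimed interval. Your explicit convexity check of $V(p)$ and the endpoint-derivative computations are a minor added rigour over the paper, which reads the interval membership directly off the formula for $p^\star$.
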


\begin{proof}
The conditional variance for a given choice of $p$, as a function of $h$, can be written as 

\begin{equation}
  \operatorname{Var}\bigl[\text{IW-ST}(p)\mid h\bigr]
  = 
  \frac{(1-p)^2\,(\mathcal{L}_0')^2}{1 - F(h)}
  \;+\;
  \frac{p^2\,(\mathcal{L}_1')^2}{F(h)}
  \;-\;
  \bigl((1-p)\,\mathcal{L}_0' + p\,\mathcal{L}_1'\bigr)^{2},
\end{equation}

where $\mathcal{L}'_0=\frac{d\mathcal{L}}{do}|_{o=0}$ and $\mathcal{L}'_1=\frac{d\mathcal{L}}{do}|_{o=1}$

This variance is minimised when 

\begin{equation}
  p^{*}
  \;=\;
  \frac{\mathcal{L}_0'\,F(h)}
       {\mathcal{L}_0'\,F(h)\;+\;\mathcal{L}_1'\,(1 - F(h))}.
\end{equation}

We can observe the following relationship between $\mathcal{L}_1'
$, $\mathcal{L}_0'$, and $p^*$:

\begin{align}
  |\mathcal{L}_0'| = |\mathcal{L}_1'|
  &\;\implies\;
  p^* = F(h), \\[0.5ex]
  |\mathcal{L}_0'| < |\mathcal{L}_1'|
  &\;\implies\;
  p^* \in [\,0,\,F(h)\,], \\[0.5ex]
  |\mathcal{L}_0'| > |\mathcal{L}_1'|
  &\;\implies\;
  p^* \in [\,F(h),\,1\,].
\end{align}

 If we assume that the network has undergone some training such that the loss as a function of a given neuron is convex on $[0,1]$ and $sign(\mathcal{L}_1')=sign(\mathcal{L}_0')$ on this interval (i. e. the network is near a local minima), then we have $\mathcal{L}_0<\mathcal{L}_1 \implies |\mathcal{L}_0'| < |\mathcal{L}_1'| $ and $\mathcal{L}_0>\mathcal{L}_1 \implies |\mathcal{L}_0'| > |\mathcal{L}_1'| $. 
 
Note also that the weights are updated by gradient descent using $(\mathcal{\mathcal{L}}_1 - \mathcal{\mathcal{L}}_0) \cdot\frac{dF(h)}{dw}$, such that when $\mathcal{L}_1<\mathcal{L}_0$, the weights are updated so as to increase the probability of a neuron firing and the opposite is true for when $\mathcal{L}_1>\mathcal{L}_0$. In other words,  if $F(h)>0.5$, this implies that the network has learnt that on average $\mathcal{L}_1<\mathcal{L}_0$. Using this, we arrive at the result:

\begin{align}
  F(h) > 0.5
  &\;\iff\;
  \mathcal{L}_1 < \mathcal{L}_0
  \;\implies\;
  \bigl\lvert\mathcal{L}_1'\bigr\rvert < \bigl\lvert\mathcal{L}_0'\bigr\rvert
  \;\implies\;
  p^* \in [\,F(h),\,1], \\[0.75ex]
  F(h) < 0.5
  &\;\iff\;
  \mathcal{L}_0 < \mathcal{L}_1
  \;\implies\;
  \bigl\lvert\mathcal{L}_0'\bigr\rvert < \bigl\lvert\mathcal{L}_1'\bigr\rvert
  \;\implies\;
  p^* \in [\,0,\,F(h)].
\end{align}

\end{proof}

\section*{D - Local Reparameterisation for CNNs and SNNs}

To use the IW-ST estimator, we must apply the local reparameterisation trick to the noisy weights so that each neuron's output can be treated as a Bernoulli random variable, with a probability that is a differentiable function of the network parameters. Two conditions ensure this: (i) the preactivation is a linear function of the weights, and (ii) the activation function is applied directly to this linear combination.

\subsubsection*{Applied to convolutional networks}
For a convolutional network, we can write the output of a neuron in a convolutional layer as $o_c^{(l)}=H(W_c^{(l)}*o^{(l-1)})$, where $c$ refers to the channel dimension. Since the convolution is a linear operator, we can see this satisfies i) and ii) above, making the Bayesian approach easy to apply in this context. To handle weight-sharing, when applying the convolution, we can either sample the weights $W_c^{(l)}$ once, and apply this same sampled weight at all positions (meaning the randomness is shared for all weights), or we could sample a separate weight $W_c^{(l,p)}$ for each position, giving $o_c^{(l)}=H(W_c^{(l,p)}*o^{(l-1)})$, where each weight is sampled from the same distribution, $W_c^{(l,p)} \sim N(M_c^{(l)},\sigma_c^{(l)2})$. The latter approach means that the weight parameters are shared, but the randomness is independent for each position. The former approach is problematic when attempting to use the ST estimator, as using the same sample at each position induces a correlation between the output neurons, meaning the noise CDF for each neuron will be complex to work with. Sampling independent weights at each position on the other hand means each neuron will be independent, leading to a straight-forward application of the ST estimator.

\subsubsection*{Applied to spiking neural networks}
Recall that we assume an SNN is governed by the following equations:
\begin{align}
  o_{i,t}^{(l)}
  &= 
  H\bigl(h_{i,t}^{(l)} - \theta\bigr), \\[1ex]
  h_{i,t}^{(l)}
  &= 
  \beta\,h_{i,t-1}^{(l)}
  \;+\;
  \sum_{j} w_{ij}^{(l)}\,o_{j,t}^{(l-1)}
  \;+\;
  \sum_{k \neq i} r_{ik}^{(l)}\,o_{k,t-1}^{(l)}
  \;-\;
  \theta\,o_{i,t-1}^{(l)}.
\end{align}

Crucially, we can see that both the feedforward weights $w_{ij}^{(l)}$ and recurrent weights $r_{ik}^{(l)}$ enter the membrane potential linearly, and the output is the Heaviside of the membrane potential, so i) and ii) are satisfied. For weight sharing, we apply the same argument as we made in the case of CNNs - that is, we prefer to sample independent weights at each time step from the same underlying distribution -  $w_{ij}^{(l,t)} \sim N(m_{ij}^{(l)},\sigma_{ij}^{2(l-1)})$, meaning we are sharing the underlying parameters but not sharing the randomness across time steps. Note however that we still have an issue -- the persistence of the membrane potential induces a dependence between $o^{(l)}_{i,t}$ and $o^{(l)}_{i,s}$ for all times $t,s$, which again makes the CDF for each neuron difficult to work with. The solution to this challenge relies upon using an alternative view of the SNN, which we call the convolutional view: by expanding the recurrence relation above, we can write the membrane potential as

\begin{equation}
  h_{i,t}^{(l)}
  = 
  \sum_{s=0}^{t-1}
    \beta^{s}\,
    \Bigl(
      \sum_{j} w_{ij}^{(l,t,s)}\,o_{j,t-s}^{(l-1)}
      \;+\;
      \sum_{k \neq i} r_{ik}^{(l,t,s)}\,o_{k,t-s-1}^{(l)}
      \;-\;
      \theta\,o_{i,t-s-1}^{(l)}
    \Bigr)
\end{equation}

This approach then involves not just sampling separate independent weights at every new time step, but at each time step resampling the weights from previous time steps, $w_{ij}^{(l,t,s)} \sim N(m_{ij}^{(l)},\sigma_{ij}^{2(l-1)})$. This ensures that the output at each time step is independent, making the CDF easy to derive for each neuron at each time step. Importantly, we can still implement this scheme efficiently using the recurrent view, without the need of actually resampling the weights for all previous time steps. Applying the reparameterisation $w_{ij}^{(l,t,s)} =m_{ij}^{(l)}+\sigma_{ij}^{(l)}\epsilon^{(f,l,t,s)}_{ij}$ and  $r_{ik}^{(l,t,s)} =p_{ik}^{(l)}+\nu_{ik}^{(l)}\epsilon^{(r,l,t,s)}_{ik}$, where $f$ and $r$ denote noise relating to the feedforward and recurrent weights respectively, and $\epsilon$ is used to denote a sample from a standard normal random variable, we can write the membrane potential as:

\begin{align}
  h_{i,t}^{(l)}
  &= 
  \sum_{s=0}^{t-1} \beta^s
    \Bigl(
      \sum_{j} m_{ij}^{(l)}\,o_{j,t-s}^{(l-1)}
      + 
      \sum_{k \neq i} p_{ik}^{(l)}\,o_{k,t-s-1}^{(l)}
      - 
      \theta\,o_{i,t-s-1}^{(l)}
    \Bigr) 
  \nonumber\\
  &\quad+
  \sum_{s=0}^{t-1} \beta^s
    \Bigl(
      \sum_{j} \sigma_{ij}^{(l)}\,\epsilon_{ij}^{(f,l,t,s)}\,o_{j,t-s}^{(l-1)}
      + 
      \sum_{k \neq i} \nu_{ik}^{(l)}\,\epsilon_{ik}^{(r,l,t,s)}\,o_{k,t-s-1}^{(l)}
    \Bigr)\!.
\end{align}

Let use consider the first term:

\begin{equation}
  h^{*(l)}_{i,t}
  = 
  \sum_{s=0}^{t-1} \beta^s
  \Bigl(
    \sum_{j} m_{ij}^{(l)}\,o_{j,t-s}^{(l-1)}
    \;+\;
    \sum_{k \neq i} p_{ik}^{(l)}\,o_{k,t-s-1}^{(l)}
    \;-\;
    \theta\,o_{i,t-s-1}^{(l)}
  \Bigr).
\end{equation}

Note that we can write this sum as a recursive expression:

\begin{align}
  h^{*(l)}_{i,t}
  &= 
  \beta \sum_{s=1}^{t-1} \beta^{s-1}
    \Bigl(
      \sum_{j} m_{ij}^{(l)}\,o_{j,t-s}^{(l-1)}
      + 
      \sum_{k \neq i} p_{ik}^{(l)}\,o_{k,t-s-1}^{(l)}
      - 
      \theta\,o_{i,t-s-1}^{(l)}
    \Bigr) \nonumber\\
  &\quad
  + 
  \sum_{j} m_{ij}^{(l)}\,o_{j,t}^{(l-1)}
  + 
  \sum_{k \neq i} p_{ik}^{(l)}\,o_{k,t-1}^{(l)}
  - 
  \theta\,o_{i,t-1}^{(l)}, \\[1ex]
  &= 
  \beta\,h^{*(l)}_{i,t-1}
  + 
  \sum_{j} m_{ij}^{(l)}\,o_{j,t}^{(l-1)}
  + 
  \sum_{k \neq i} p_{ik}^{(l)}\,o_{k,t-1}^{(l)}
  - 
  \theta\,o_{i,t-1}^{(l)}.
\end{align}

This gives us the evolution of the deterministic membrane potential from the recurrent perspective. 

For the second term of the sum, we first observe that this is a sum of independent normal random variables, and therefore itself is a normal random variable, characterised by its mean and variance. To calculate its mean and variance, we have:

\begin{equation}
\begin{split}
  &\mathbb{E}\Bigl[\sum_{s=0}^{t-1}\beta^s\bigl(
      \sum_{j}\sigma_{ij}^{(l)}\,\epsilon_{ij}^{(f,l,t,s)}\,o_{j,t-s}^{(l-1)}
      \;+\;
      \sum_{k\neq i}\nu_{ik}^{(l)}\,\epsilon_{ik}^{(r,l,t,s)}\,o_{k,t-s-1}^{(l)}
    \bigr)\Bigr]\\
  &=\sum_{s=0}^{t-1}\beta^s\bigl(
      \sum_{j}\sigma_{ij}^{(l)}\,\mathbb{E}[\epsilon_{ij}^{(f,l,t,s)}]\,o_{j,t-s}^{(l-1)}
      \;+\;
      \sum_{k\neq i}\nu_{ik}^{(l)}\,\mathbb{E}[\epsilon_{ik}^{(r,l,t,s)}]\,o_{k,t-s-1}^{(l)}
    \bigr)\\
  &=0.
\end{split}
\end{equation}

\begin{align}
  \mathrm{Var}&\Biggl[\sum_{s=0}^{t-1}\beta^s
    \Bigl(
      \sum_{j} \sigma_{ij}^{(l)}\,\epsilon_{ij}^{(f,l,t,s)}\,o_{j,t-s}^{(l-1)}
      \;+\;
      \sum_{k\neq i} \nu_{ik}^{(l)}\,\epsilon_{ik}^{(r,l,t,s)}\,o_{k,t-s-1}^{(l)}
    \Bigr)\Biggr]
  \nonumber\\
  &= 
  \sum_{s=0}^{t-1}\beta^{2s}
    \Bigl(
      \sum_{j} \mathrm{Var}\bigl(\epsilon_{ij}^{(f,l,t,s)}\bigr)\,\sigma_{ij}^{(l)2}\,o_{j,t-s}^{(l-1)2}
      \;+\;
      \sum_{k\neq i} \mathrm{Var}\bigl(\epsilon_{ik}^{(r,l,t,s)}\bigr)\,\nu_{ik}^{(l)2}\,o_{k,t-s-1}^{(l)2}
    \Bigr)
  \nonumber\\
  &= 
  \sum_{s=0}^{t-1}\beta^{2s}
    \Bigl(
      \sum_{j} \sigma_{ij}^{(l)2}\,o_{j,t-s}^{(l-1)2}
      \;+\;
      \sum_{k\neq i} \nu_{ik}^{(l)2}\,o_{k,t-s-1}^{(l)2}
    \Bigr).
\end{align}

Let \( \kappa^{(l)2}_{i,t}=\sum_s\beta^{2s}(\sum_j \sigma_{ij}^{(l)2}o^{(l-1)2}_{j,t-s}+\sum_{k \neq i} \nu_{ik}^{(l)2} o^{(l)2}_{k,t-s-1}))\). This variance itself follows a recurrence relation:

\begin{align}
  \kappa_{i,t}^{(l)2}
  &= 
  \beta^2 \sum_{s=1}^{t-1} \beta^{2(s-1)}
     \Bigl(
       \sum_{j} \sigma_{ij}^{(l)2}\,o_{j,t-s}^{(l-1)2}
       \;+\;
       \sum_{k\neq i} \nu_{ik}^{(l)2}\,o_{k,t-s-1}^{(l)2}
     \Bigr)
  \nonumber\\
  &\quad
  +\;\sum_{j} \sigma_{ij}^{(l)2}\,o_{j,t}^{(l-1)2}
  \;+\;
  \sum_{k\neq i} \nu_{ik}^{(l)2}\,o_{k,t-1}^{(l)2}
  \\[0.75ex]
  &= 
  \beta^2\,\kappa_{i,t-1}^{(l)2}
  \;+\;
  \sum_{j} \sigma_{ij}^{(l)2}\,o_{j,t}^{(l-1)2}
  \;+\;
  \sum_{k\neq i} \nu_{ik}^{(l)2}\,o_{k,t-1}^{(l)2}.
\end{align}

We therefore have the following equations which describe how the membrane potential and variance evolves across time:

\begin{align}
  h^{(l)}_{i,t}
  &= 
  h^{*(l)}_{i,t}
  \;+\;
  \kappa^{(l)}_{i,t}\,\epsilon^{(l)}_{i,t}, \\[0.75ex]
  h^{*(l)}_{i,t}
  &= 
  \beta\,h^{*(l)}_{i,t-1}
  \;+\;
  \sum_{j} m_{ij}^{(l)}\,o_{j,t}^{(l-1)}
  \;+\;
  \sum_{k\neq i} p_{ik}^{(l)}\,o_{k,t-1}^{(l)}
  \;-\;
  \theta\,o_{i,t-1}^{(l)}, \\[0.75ex]
  \kappa^{(l)2}_{i,t}
  &= 
  \beta^2\,\kappa^{(l)2}_{i,t-1}
  \;+\;
  \sum_{j} \sigma_{ij}^{(l)2}\,o_{j,t}^{(l-1)2}
  \;+\;
  \sum_{k\neq i} \nu_{ik}^{(l)2}\,o_{k,t-1}^{(l)2},
\end{align}

where $h^{*(l)}_{i,t}$ is the "noiseless" membrane potential, to which we add a normal random variable with variance $\kappa_{i,t}^{(l)2}$ to arrive at the membrane potential $h_{i,t}^{(l)}$. We can see that both the "noiseless" membrane potential $h^{*(l)}_{i,t}$ and the variance of the noise $\kappa^{(l)2}_{i,t}$ are governed by recurrence relations, meaning the current value depends only on the previous value and new inputs at the current time step. This means given the variance and noiseless membrane potential at time $t-1$, we calculate the value of the variance and noiseless membrane potential at time $t$ using their recurrence relations,  and then calculate the membrane potential using a new independently sampled noise term $\epsilon^{(l)}_{i,t}$ from the standard normal distribution.

The behaviour of a neuron using this approach can be interpreted as follows: As the (deterministic) membrane potential increases, the probability of the neuron firing increases. Additionally, as the amount of presynaptic activity increases, the more "random" a neurons output becomes (i. e. the probability of firing moves towards 0.5).

\section*{E - Additional Experiments, Results, and Training Details}

\subsection*{Additional Experiments}
Unless otherwise mentioned, the additional experiments performed here are conducted using a binary network on CIFAR-10. 

\subsubsection*{IW-ST experiments}
Here we investigate the practical performance of the IW-ST($p$) estimator for different choices of $p$. We compare four choices of $p$: i) $p=0$, LHS Euler rule (P0 in Fig. \ref{fig:iwst}), ii) $p=1$, RHS Euler rule (P1 in Fig. \ref{fig:iwst}), iii) $p=0.5$, symmetric trapezoid rule for low bias (LB in Fig. \ref{fig:iwst}), iv) $p=F(h)$, classical ST, v) $p=1$ if $F(h)>0.5$ and $p=0$ if $F(h)<0.5$, (low variance LV in Fig. \ref{fig:iwst}). The low variance approach is based on the end point (0 or 1) of the low-variance interval derived in propositions (3-4). 

Fig. \ref{fig:iwst} shows training loss and test accuracy for a representative run, with similar trends observed across multiple seeds. Both Euler-based methods underperform, consistent with the theoretical findings that fixed endpoints do not lie within the low-bias or low-variance intervals (Propositions 3–4). The RHS method ($p=1$) performs especially poorly, likely due to propagating gradients only through firing neurons. In sparse activity regimes, this limits learning signals. Conversely, $p=0$ propagates gradients through silent neurons, which may constitute the majority and thus offer more informative gradients.

These findings, although theoretically anticipated, challenge prevailing assumptions in biologically inspired learning rules such as Hebbian plasticity and spike-timing-dependent plasticity (STDP), where updates occur only between co-active neurons. Moreover, alternative gradient estimators for SNNs based on differentiating through the precise spike time, such as in \cite{wunderlich2021event},  also only propagate gradients through active neurons, though their empirical performance often lags. Our derivation offers an explanation: ST-based estimators implicitly incorporate noise and enable credit assignment to all neurons near firing threshold—yielding lower variance gradients and more robust learning signals.

Among the non-Euler estimators, performance differences are minimal. The ST estimator exhibits a slight edge, making it a pragmatic choice given its simplicity and robustness—it effectively interpolates between the low-bias and low-variance approaches.

\begin{figure}[tb]
    \centering
    \begin{minipage}[b]{0.48\linewidth}
        \centering
        \includegraphics[width=\linewidth]{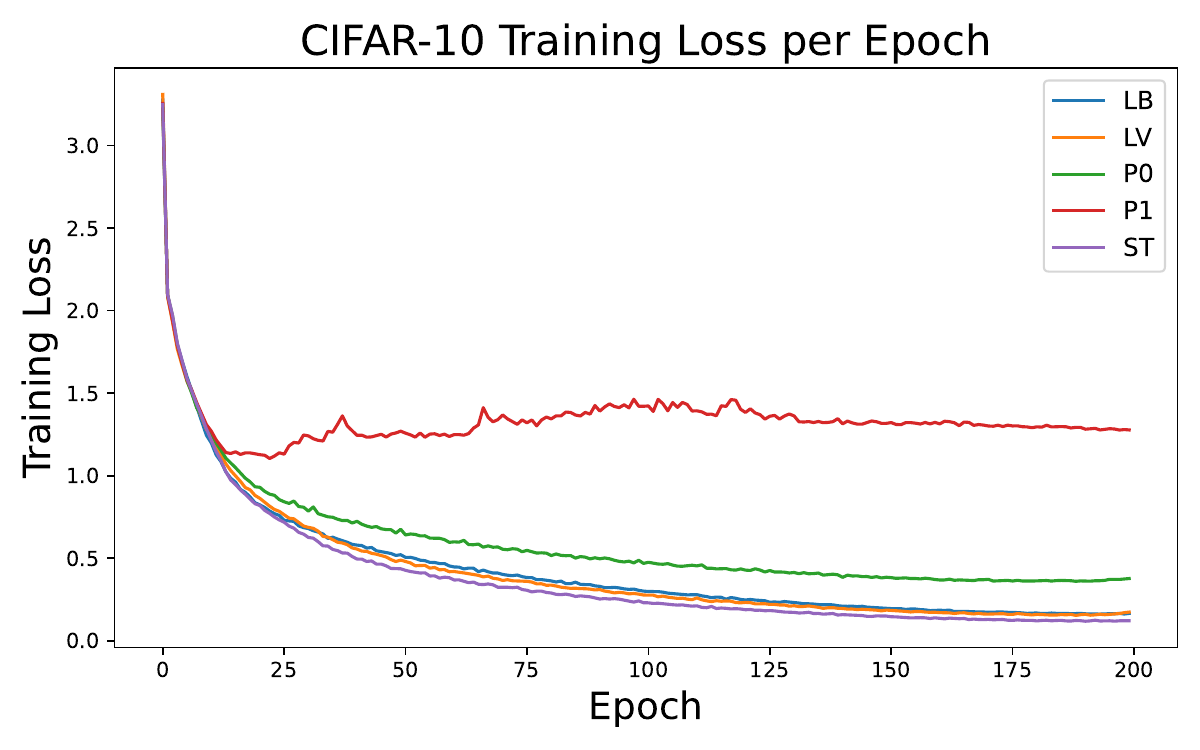}

    \end{minipage}
    \hfill
    \begin{minipage}[b]{0.48\linewidth}
        \centering
        \includegraphics[width=\linewidth]{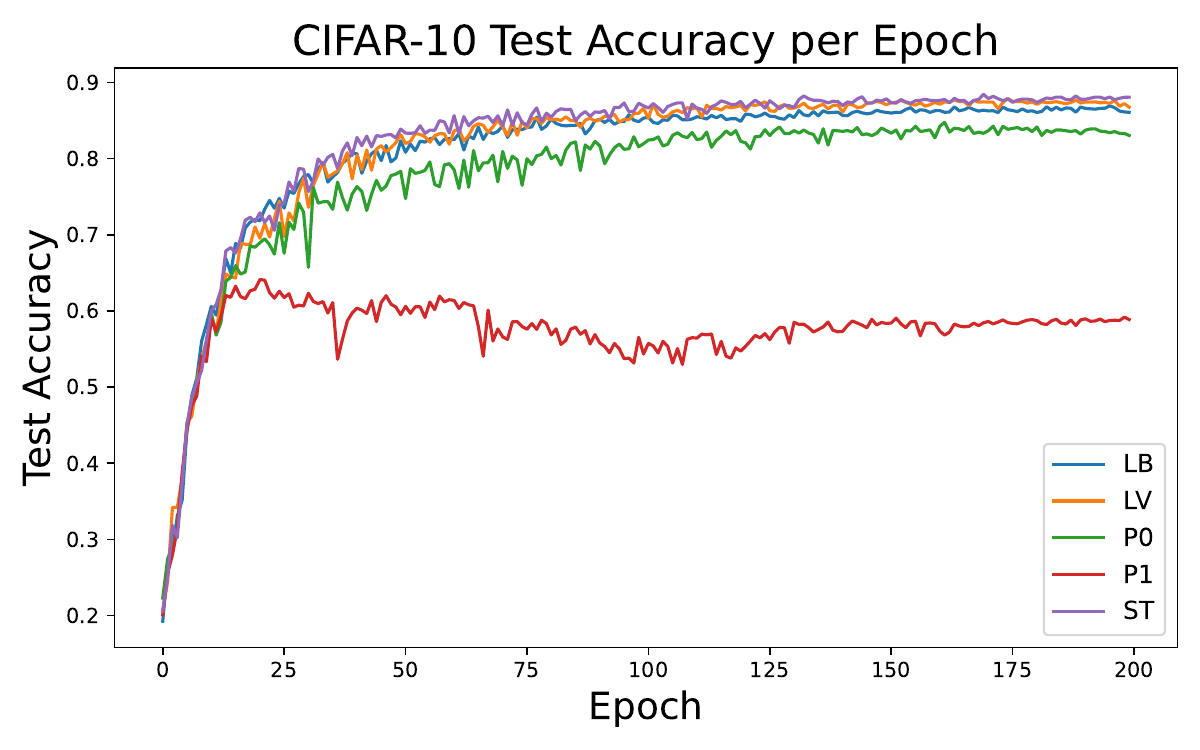}
    \end{minipage}
    
    \caption{Training loss and test accuracy  for IW-ST($p$) estimators on CIFAR-10}
    \label{fig:iwst}
\end{figure}

\subsubsection*{AGR estimators}

Here we assess the performance of AGR estimators using sigmoid relaxations with temperatures: $k=1$ and $k=0.2$, and compare them against the ST estimator. Note that the only critical difference between AGR estimators and our IW-ST($p$) estimators is the additional dampening performed by the former. 

Fig. \ref{fig:agr} shows that the AGR estimator significantly underperforms compared to the ST estimator, regardless of the temperature. In deep binary networks, cumulative damping hinders effective gradient propagation, particularly in deeper layers, leading to reduced training efficiency.

\begin{figure}[tb]
    \centering
    \begin{minipage}[b]{0.48\linewidth}
        \centering
        \includegraphics[width=\linewidth]{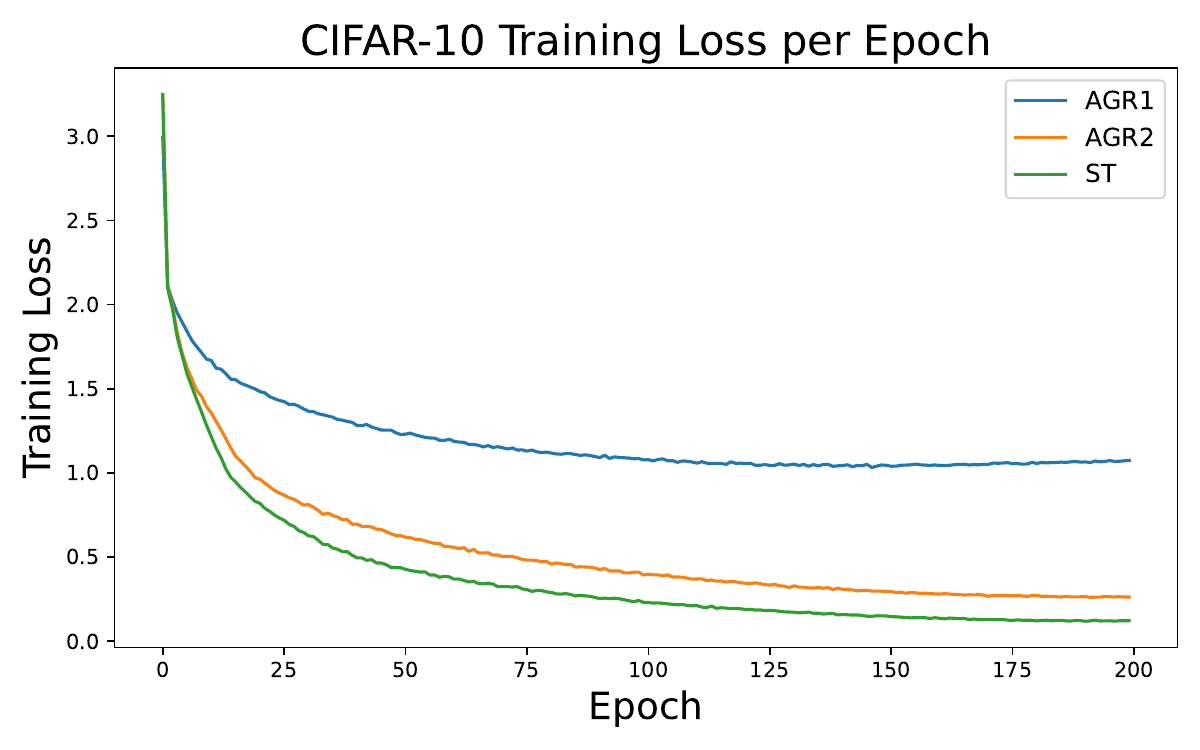}

    \end{minipage}
    \hfill
    \begin{minipage}[b]{0.48\linewidth}
        \centering
        \includegraphics[width=\linewidth]{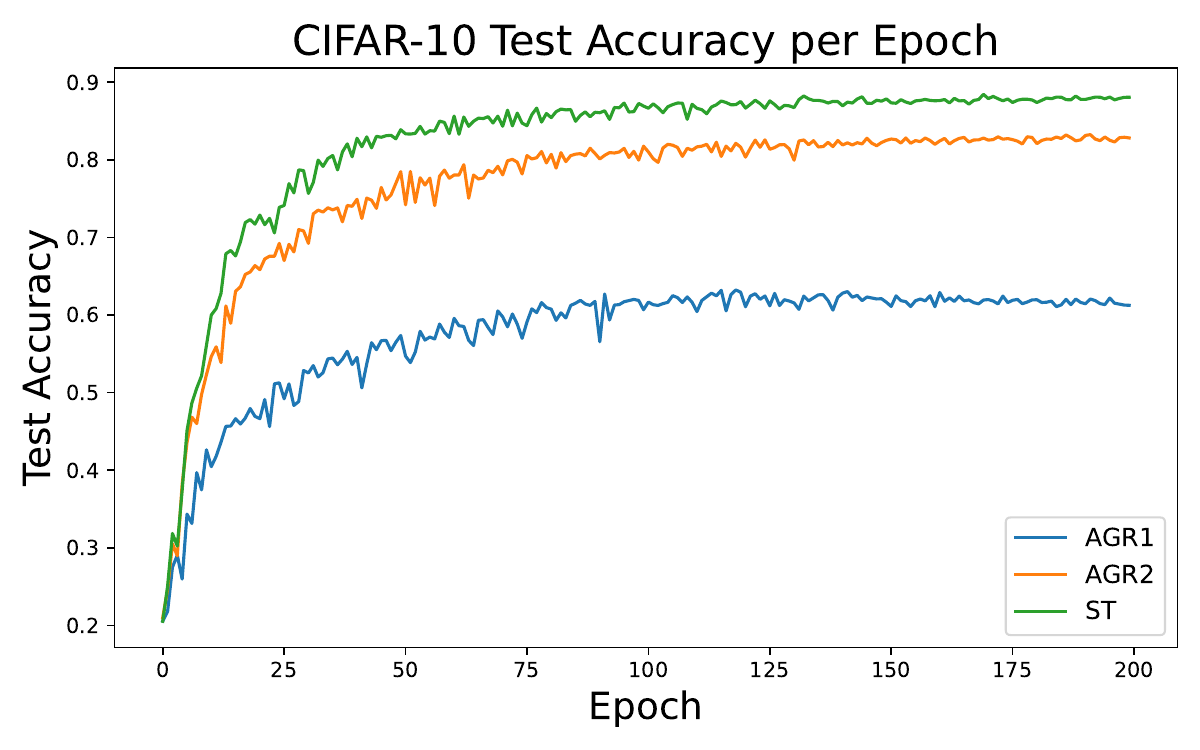}
 
    \end{minipage}
    \caption{Training loss and test accuracy for AGR estimators on CIFAR-10}
    \label{fig:agr}
\end{figure}

\subsubsection*{Vanishing gradients}

To test our hypothesis that the KL divergence term mitigates vanishing gradients, we analysed per-layer the gradient norms after 30 epochs, comparing our models with a range of KL weights ($\lambda$) and batch normalisation. 

As shown in Fig. \ref{fig:vangrad}, models with KL regularisation or batch normalisation maintain consistent gradient norms across layers.  Without the KL term, gradients vanish in deeper layers. This effect is robust across a wide range of $\lambda$ values. The attenuation factor plot, Fig. \ref{fig:attenutation}, further support our hypothesis: KL-regularised models exhibit stable attenuation factors across epochs, closely matching those of batch-normalised networks. 

\begin{figure}[tb]
    \centering
    \begin{minipage}[b]{0.48\linewidth}
        \centering
        \includegraphics[width=\linewidth]{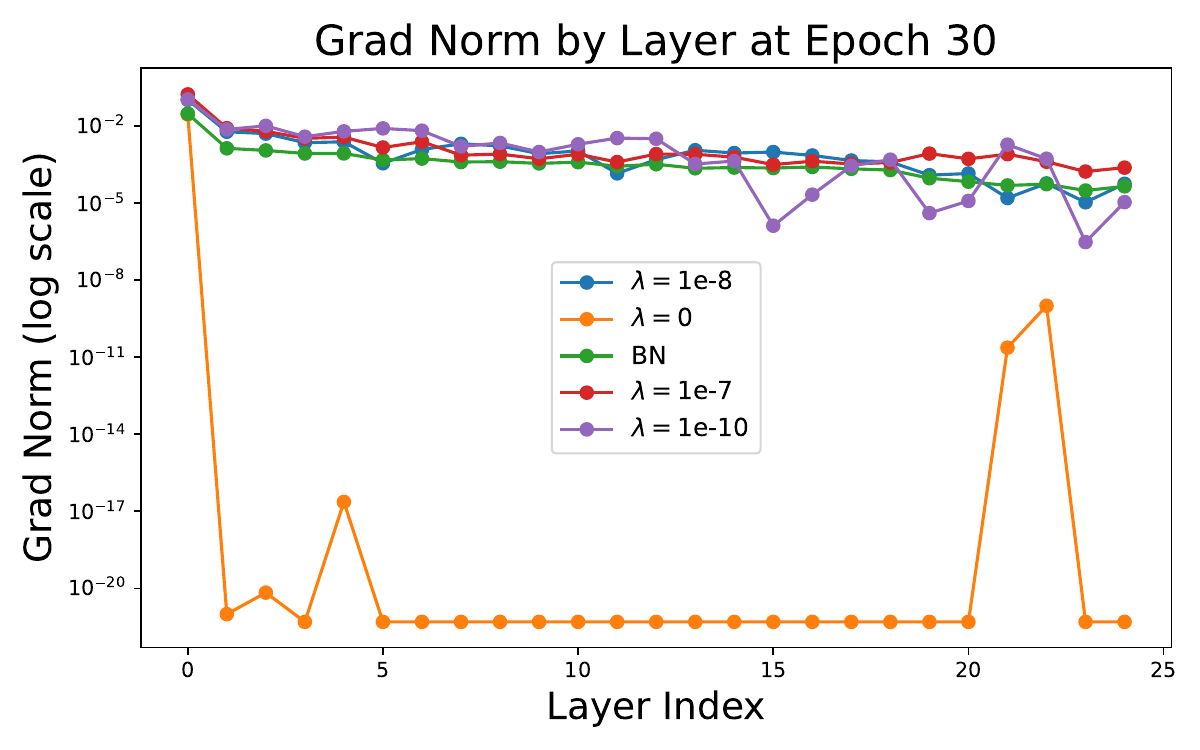}
        \caption{Gradient norms per layer after 30 epochs}
        \label{fig:vangrad}
    \end{minipage}
    \hfill
    \begin{minipage}[b]{0.48\linewidth}
        \centering
        \includegraphics[width=\linewidth]{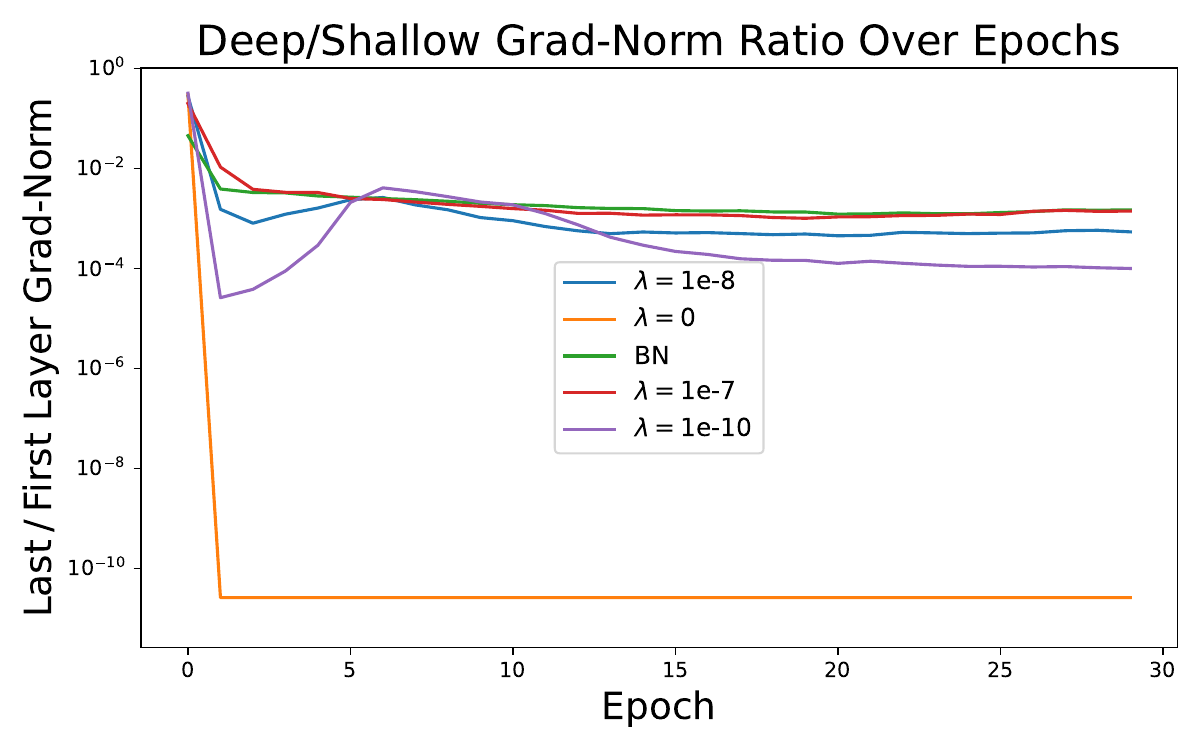}
        \caption{Attenuation factor over first 30 Epochs}
        \label{fig:attenutation}
    \end{minipage}
\end{figure}

\subsubsection*{Learnable scale parameters}

In the main paper, we considered one variance parameter per layer. Here we consider both one variance parameter per neuron, and one variance parameter per weight. The advantage of these additional parameters is that we enable the model to reflect uncertainty at a more granular level, hence are more expressive. On the other hand, the more parameters we use increases the complexity and computational overhead of the approach. 

In Fig.~\ref{fig:scales}, we plot the training loss and test accuracy across epochs for three options: one variance per layer, one variance per neuron, and one variance per weight. As we can see, these methods perform very similarly in terms of training efficiency and generalisation, suggesting that unless the advantages in terms of uncertainty quantification are desired, one variance parameter per layer suffices.

\begin{figure}[tb]
    \centering
    \begin{minipage}[b]{0.48\linewidth}
        \centering
        \includegraphics[width=\linewidth]{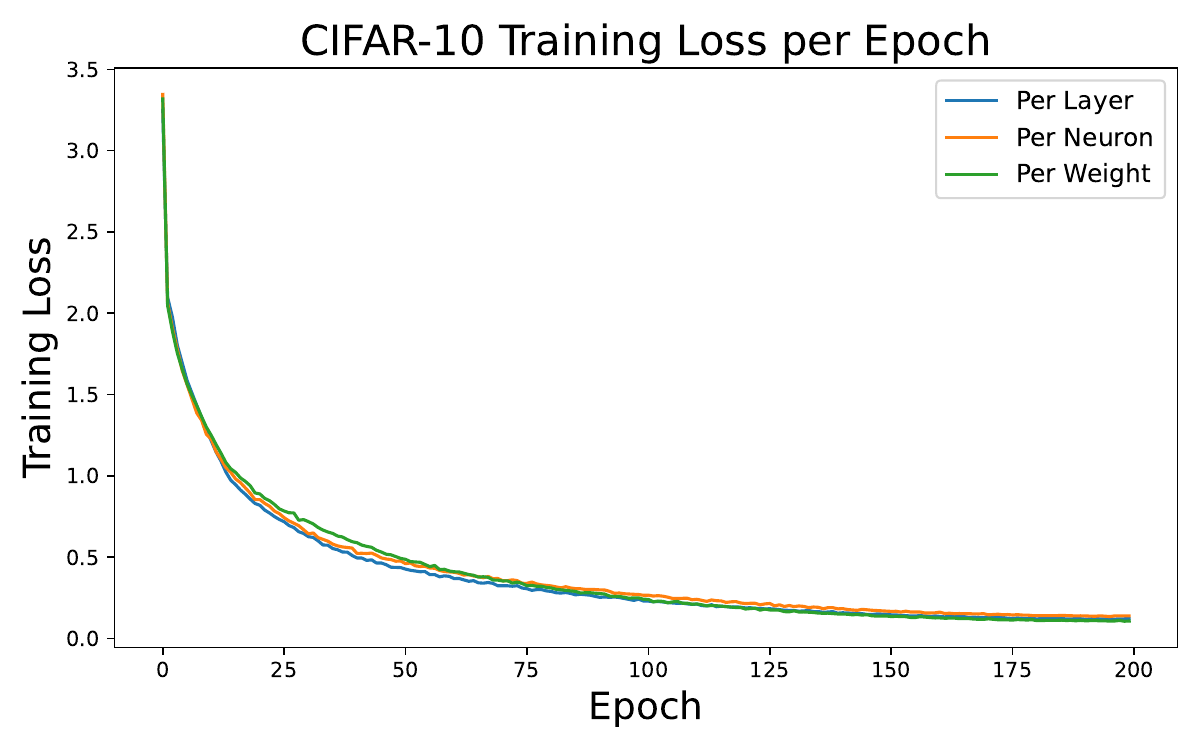}
    
    \end{minipage}
    \hfill
    \begin{minipage}[b]{0.48\linewidth}
        \centering
        \includegraphics[width=\linewidth]{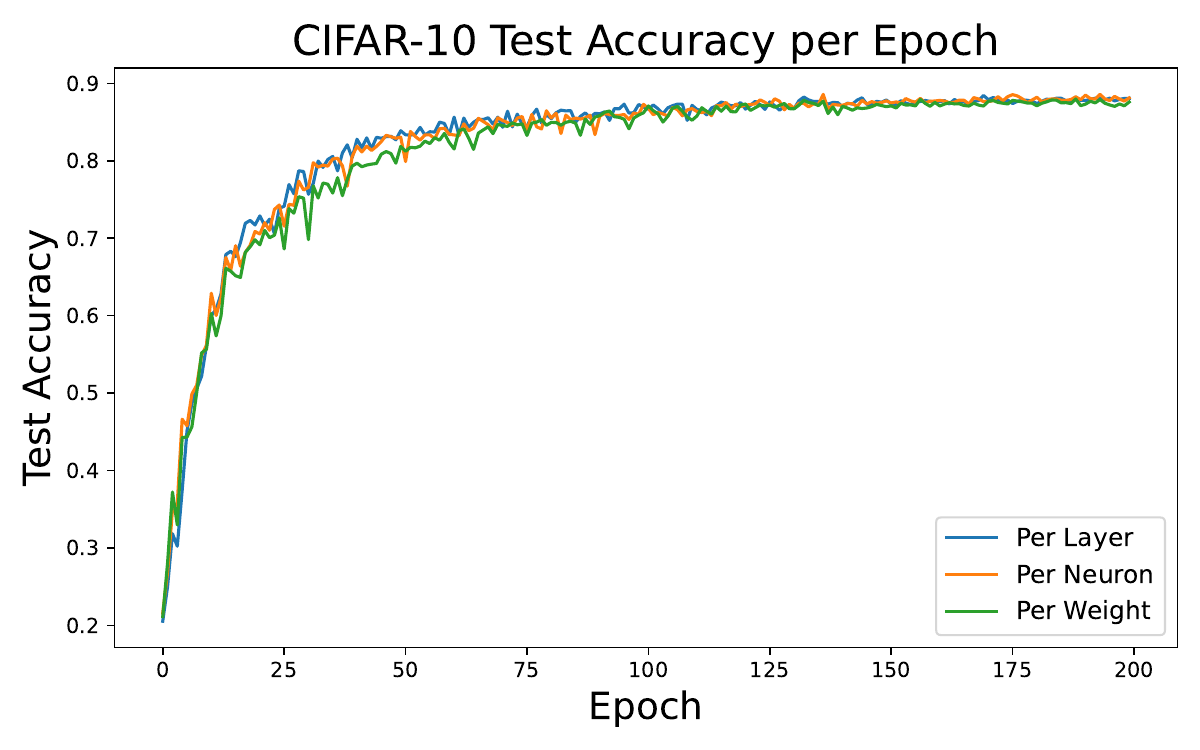}

    \end{minipage}
    
    \caption{Training loss and test accuracy for different parameter sharing settings on CIFAR-10}
    \label{fig:scales}
\end{figure}

\subsubsection*{Results tables}
Table \ref{results} reports the mean test accuracy $\pm$ one standard deviation based on 5 training runs for the experiments discussed in the main text to ensure consistency across random seeds. For all experiments with single-run results, we verified that observed patterns generalize across seeds.

\begin{table}[tb]
  \centering
  \footnotesize
  \begin{minipage}[t]{0.32\textwidth}
    \centering
    \textbf{CIFAR-10}\\[1ex]
    \begin{tabular}{lc}
      \hline
      Training Variant& Accuracy   \\
      \hline
      BBNN           & 88.0$\pm$0.2\\
      BBNN-NKL       & 45.3$\pm$0.5\\
      BBNN-MFA       & 88.3$\pm$0.2\\
      BBNN-FPV       & 90.3$\pm$0.3\\
      SG-BN          & 91.1$\pm$0.6\\
      \hline
    \end{tabular}
  \end{minipage}%
  \hfill
  \begin{minipage}[t]{0.32\textwidth}
    \centering
    \textbf{DVS Gesture}\\[1ex]
    \begin{tabular}{lc}
      \hline
      Training Variant& Accuracy   \\
      \hline
      SBNN           & 93.5$\pm$ 1\\
      SBNN-NKL       & 75.3$\pm$1.8\\
      SG-BN          & 94$\pm$0.8\\
      SG-NBN-1       & 90.3$\pm$1.3\\
      SG-NBN-2       & 78.1$\pm$1.4\\
      \hline
    \end{tabular}
  \end{minipage}%
  \hfill
  \begin{minipage}[t]{0.32\textwidth}
    \centering
    \textbf{SHD}\\[1ex]
    \begin{tabular}{lc}
      \hline
      Training Variant& Accuracy   \\
      \hline
      SBNN           & 85.1$\pm$1.3\\
      SBNN-NKL       & 70.3$\pm$1.1\\
      SG-BN          & 84.3$\pm$1.3\\
      SG-NBN-1       & 72.1$\pm$1.6\\
      SG-NBN-2       & 67.1$\pm$1.8\\
      \hline
    \end{tabular}
  \end{minipage}

  \vspace{1ex}
  \caption{Mean accuracy ($\pm$ std) for each dataset.}
  \label{results}
\end{table}

\subsection*{Training details}

\textbf{CIFAR-10}: \textit{Architecture}: 26-layer Binary ResNet (64→128→256→512 channels).

\textit{Data pre-processing}: Normalised inputs and standard data augmentation (Random Crop+Flip)
\begin{table}[tb]
  \centering
  \begin{tabular}{ll}
    \toprule
    \textbf{Hyperparameter}          & \textbf{Value}                             \\
    \midrule
    Training epochs                  & 200\\
    Batch size                       & 256                                        \\
    Optimizer                        & Adam                                       \\
    Learning rates                   & \(\ell_{\text{weights}} = 0.005\), \(\ell_{\text{scales}} = 0.05\)\\
    Learning‐rate schedule           & Cosine annealing over 200 epochs (min lr=initial/50)\\
    KL weight \(\lambda\)& \(1\times10^{-6}\)\\
    Neuron threshold& 0\\
    Surrogate function& Normal PDF with $\sigma=0.4$\\
    \bottomrule
  \end{tabular}
  \caption{CIFAR-10 training hyperparameters.}
  \label{tab:cifar10-train}
\end{table}

\textbf{DVS-128 Gesture}: \textit{Architecture}: 20-layer Spiking ResNet (32→64→128 channels).

\textit{Data pre-processing}: Raw event streams are binned into $T{=}50$ bins, yielding $50 {\times} 2 {\times}128 \times 128$ tensors. The $128 \times 128$ spatial dimensions are then bilinearly down-sampled to $32{\times}32$. 

\begin{table}[tb]
  \centering
  \begin{tabular}{ll}
    \toprule
    \textbf{Hyperparameter}          & \textbf{Value}                             \\
    \midrule
    Training epochs                  & 70                                         \\
    Batch size                       & 64                                         \\
    Optimizer                        & Adam                                       \\
    Initial learning rates           & \(\ell_{1}=0.005\), \(\ell_{2}=0.05\)      \\
    Learning‐rate schedule           & Cosine annealing over 70 epochs (min lr=initial/50)\\
    KL weight \(\lambda\)& \(1\times10^{-10}\)                        \\
    Time steps& 50\\
 Neuron threshold&1\\
 Surrogate function&Normal PDF with $\sigma=0.4$\\
 \bottomrule
  \end{tabular}
  \caption{DVS-Gesture training hyperparameters.}
  \label{tab:dvs-train}
\end{table}

\textbf{SHD (Spiking Heidelberg Digits)}: \textit{Architecture}: 20-layer Spiking ResNet (32→64→128 channels).

\textit{Data pre-processing}: Raw event streams are binned into $T{=}100$ bins, yielding $100{\times}700$ tensors.

\begin{table}[tb]
  \centering
  \begin{tabular}{ll}
    \toprule
    \textbf{Hyperparameter}          & \textbf{Value}                             \\
    \midrule
    Training epochs                  & 30                                         \\
    Batch size                       & 32                                         \\
    Optimizer                        & Adam                                       \\
    Initial learning rates           & \(\ell_{\text{weights}} = 0.005\), \(\ell_{\text{scales}} = 0.05\)\\
    Learning‐rate schedule           & Cosine annealing over 30 epochs (min lr=initial/10)\\
    KL weight \(\lambda\)& \(1\times10^{-6}\)                         \\
 Time steps&100\\
    Neuron threshold& 1
\\
    Surrogate function& Normal PDF with $\sigma=0.4$\\
    \bottomrule
  \end{tabular}
  \caption{SHD training hyperparameters.}
  \label{tab:shd-train}
\end{table}

\textbf{Additional details (all datasets)}
\begin{itemize}
\item    For parameter initialisation, we used Kaiming-uniform for convolution and fully connected weights. Scale parameters were initialised at $\sigma_0\!=\!0.5/\!\sqrt{\text{fan-in}}$ 
\item We report the model performance from the \emph{final} epoch in every case; no early stopping was applied.
\item All experiments were run on a single NVIDIA\,H100 GPU. 
\item   For models without batch normalisation, we use a learnable per-channel scale and bias.
\item  Hyperparameters selected by a simple sweep. 
\item For SNNs, we include an additional 'base' level of noise per time step as we found this improves performance. This noise can be interpreted in the Bayesian context as arising from uncertainty in the neuron threshold. 
\item We use the cross-entropy loss for all tasks. For spiking networks, the logits are calculated as the sum of membrane potentials in a read-out layer across all time steps.
\end{itemize}

\end{document}